\newtheorem{theorem}{Theorem}
\newtheorem{lemma}{Lemma}
\newtheorem{defn}{Definition}
\newtheorem{prop}{Proposition}
\newtheorem{corollary}{Corollary}
\title{Mechanism Design for Ad Auctions with Display Prices}
\author{
	Bin Li$^{1*}$\and
	Yahui Lei$^2$\\
	\affiliations
	{{$^1$School of Computer Science \& Engineering, Nanjing University of Science and Technology\\
			$^2$ Meituan Inc.\\}}
	\emails
	{cs.libin@njust.edu.cn, leiyahui@meituan.com}
}
\begin{document}
	\maketitle
	
	\begin{abstract}
		In many applications, ads are displayed together with the prices, so as to provide a direct comparison among similar products or services. The price-displaying feature not only influences the consumers' decisions, but also affects the advertisers' bidding behaviors.
		In this paper, we study ad auctions with display prices from the perspective of mechanism design, in which advertisers are asked to submit both the costs and prices of their products.
		We provide a characterization for all incentive compatible auctions with display prices, and use it to design auctions under two scenarios. In the former scenario, the display prices are assumed to be exogenously determined. For this setting, we derive the welfare-maximizing and revenue-maximizing auctions for any realization of the price profile. In the latter, advertisers are allowed to strategize display prices in their own interests. We investigate two families of allocation policies within the scenario and identify the equilibrium prices accordingly. 
		Our results reveal that the display prices do affect the design of ad auctions and the platform can leverage such information to optimize the performance of ad delivery. 
	\end{abstract}
	
	\section{Introduction}
	\footnote{$^*$ Corresponding Author.}
	Online advertising has been an indispensable part of modern advertising market. According to the newly released report of IAB \cite{iab:2021}, full-year online advertising revenue has reached \$189.3 billion in 2021. An important reason for wide-spread adoption of online advertisement comes from its high return on investment for advertisers, compared to other traditional marketing methods \cite{moran2005search}. As an efficient tool for deriving revenue, auctions are commonly used to allocate the display opportunities. Every day, tens of billions of ad auctions are conducted in real time to decide which advertisers' ads are shown, how these ads are arranged, and what the advertisers are charged. To date, online advertising platforms \cite{google:auto,microsoft:auto,facebook:auto} have developed various types of products for different types of advertisers, such as pay-per-mille or pay-per-impression (PPM), pay-per-click (PPC), and pay-per-action (PPA). In the classic ad auction setting, such as sponsored search, ads are presented in the form of hyper-links together with relevant keywords or well-designed creatives, serving as portals of advertisers' websites or products. Which ads are displayed depends on advertisers' bids and the relevance of their ads to the context \cite{VARIAN20071163}. However, in many real applications, like Temu or Ctrip, ads (or products) are displayed also together with the prices, e.g., it can be the per-night price of a room or the group purchase price of a commodity. The price-displaying feature brings two significant changes for the advertising system. On the one hand, the prices provide a direct comparison among similar products or services, which can easily influence the consumers' decisions. One the other hand, the price information also affects the advertisers' bidding behaviors and the efficiency of the underlying ad auctions \cite{Castiglioni2022EfficiencyOA}. 
	
	As the display prices has changed the bidding language and the way advertisers participate in the ad auction, fundamental investigation into mechanism design for auctions with display prices should be made. In this paper, we study how the presence of display prices affects ad auction design. In our model, advertisers are asked to submit both the costs and prices of their products and the advertising platform allocates the display opportunities and decides the charges based on the submitted information. Our model differs from the classic one in two ways. Firstly, rather than submitting a single bid for the display opportunities, we ask advertisers to submit both the costs and prices of their products. Secondly, in classic ad auctions, a convention, like a purchase, of an ad is exogenously determined and is independent of the advertisers' submission, while in our model conventions are essentially determined by the submitted display prices as the price information can dramatically affect the consumers' behaviors. Based on the framework of mechanism design, we carry out a systemic investigation on ad auctions with display prices. Specifically, we characterize all ad auctions that can incentivize truthful cost reports, and use the characterization to design auctions under two scenarios. In the former scenario, the display prices are assumed to be exogenously determined. For this scenario, we derive the welfare-maximizing and revenue-maximizing auctions for any realization of the price profile. In the latter scenario, the advertisers are allowed to strategize their display prices. For this setting, we investigate two families of allocation policies and compute the equilibrium price report accordingly. 
	Our results show that the display prices do affect the design of ad auctions, and the advertising platform can leverage such information to further optimize the performance of ad delivery. 
	
	In addition to the great industrial success, ad auctions have attracted a lot of attention from the research community. Since Overture, for the first time, adopted the generalized first price auction mechanism in its sponsored search system in 1997 \cite{EDELMAN2007192,jansen2008}, many researchers from economics, computer sciences, management science, etc. have been working on different aspects of ad auctions for decades.
	Some of them focus on studying the theoretical properties of the deployed ad auctions \cite{edelman2007,VARIAN20071163,wilkens2017}, while some others are devoted to design new auction mechanisms towards different scenarios and objectives \cite{LAFFONT1996181,chenchen2019,glo2021}. With the development of AI, it is even possible to design ad auctions automately \cite{agg2019,shen2017,yang2019aiads}, based on advanced machine learning techniques and massive transaction data. To the best of our knowledge, Castiglioni et al. \cite{Castiglioni2022EfficiencyOA} is the very first to study ad auctions with display prices. The authors study the allocation efficiency of two widely used auctions, namely VCG and GSP, in the presence of display prices, and analyze the Price of Anarchy (PoA) and the Price of Stability (PoS) in the direct and indirect realizations of these two auctions, respectively. In contrast, we focus on the counterpart and study new ad auctions from the perspective of mechanism design.
	
	The reminder of this paper is organized as follows. Section 2 presents the basic model of auction with display prices and defines some general concepts of an auction mechanism. Section 3 characterizes all truthful auctions with display prices. Following the characterization, Section 4 investigates the welfare-maximizing and revenue-maximizing auctions, under the assumption of non-strategic display prices. Section 5 studies two families of auction mechanisms in the general settings and Section 6 summarizes this work.
	
	\section{Preliminaries}
	Assume there is a set of advertisers, denoted by $N=\{1,2,\cdots, n\}$, who decide to advertise their products in an online selling platform. For each advertiser $i\in N$, let $c_i\in [\underline{c}_i,\overline{c}_i]$ denote the cost (or type) of her product, which is assumed private information and derived from a distribution $\mathcal{C}_i$, and let $p_i$ denote the display price that $i$ sets for her product. As the display prices provide a direct comparison among similar products or services, for simplicity we assume the consumers' behaviors are determined by the products' display prices.
	Formally, let $\lambda_i: \mathcal{R}\rightarrow [0, 1]$ denote the conversion rate function of $i$'s product. That is, $\lambda_i(p_i)$ represents the probability that a conversion, like a purchase, is acquired under display price $p_i$. Given an advertiser $i$, the expected value when her product is displayed on the selling platform can be formulated as $v_i(c_i, p_i)=(p_i-c_i)\lambda_i(p_i)$. The selling platform runs an auction to allocate the display opportunities. Besides the product cost, each advertiser is asked to report her display price to the auction.
	Since $c_i$ is private information, advertiser $i$ can cheat the auction to benefit herself. Accordingly, let $(c_i', p_i)$ denote $i$'s report, where $c_i'$ is the reported cost and $p_i$ is the reported display price. For convenience, we use ${\bf c}'$ and ${\bf p}$ to denote the reported type profile and display prices of all advertisers, respectively. In addition, let ${\bf c}'_{-i}$ and ${\bf p_{-i}}$ be the reported type profile and reported prices of all advertisers except $i$, i.e., ${\bf c}' = (c_i', {\bf c}'_{-i})$ and ${\bf p} = (p_i, {\bf p}_{-i})$. The formal definition of auction mechanisms with display prices is given below.
	
	\begin{defn}\label{int_auction}
		An \emph{auction mechanism} $\mathcal{M}=(\pi,x)$ consists of an allocation policy $\pi=\{\pi_i\}_{i\in N}$ and a payment policy $x=\{x_i\}_{i\in N}$, where $\pi_i: \mathcal{R}_{+}^{n+1}\rightarrow \{0, 1\}$ and $x_i:\mathcal{R}_{+}^{n+1}\rightarrow \mathcal{R}$ are the allocation and payment functions for $i$, respectively. 
	\end{defn}
	Given all advertisers' reports $({\bf c}', {\bf p})$, $\pi_i({\bf c}', {\bf p})$ indicates whether or not advertiser $i$ wins the slot and $x_i({\bf c}', {\bf p})$ denotes the amount each advertiser $i$ pays to the platform. For advertiser $i$ with a report $(c_i', p_i)$, her utility function under $(\pi,x)$ is quasi-linear and is defined in the following:
	\begin{equation}
		u_i\big(c_i, {\bf c}^\prime, {\bf p}, (\pi,x)\big)=v_i(c_i, p_i)\pi_i({\bf c}', {\bf p})-x_i({\bf c}', {\bf p}). \label{uti_fun}
	\end{equation}
	
	Given an auction mechanism $\mathcal{M}=(\pi,x)$, the social welfare obtained in $({\bf c}', {\bf p})$, denoted by $W({\bf c}', {\bf p}, \mathcal{M})$, is defined as the total utilities of all agents (including the platform), which can be simplified as $W({\bf c}', {\bf p}, \mathcal{M})=\sum_{i\in N} v_i(c_i, p_i)\pi_i({\bf c}', {\bf p})$. We say an auction mechanism is efficient with reported prices (EF-RP) if for all reports $({\bf c}', {\bf p})$ it maximizes $W({\bf c}', {\bf p}, \mathcal{M})$.
	\begin{defn}\label{efficient_allocation_p}
		An auction mechanism $\mathcal{M}$ is \emph{efficient with reported prices (EF-RP)} if for all reports $({\bf c}', {\bf p})$
		\begin{equation}
			\mathcal{M} \in {\arg\max}_{\mathcal{M}'} W({\bf c}', {\bf p}, \mathcal{M}').
		\end{equation}
	\end{defn}
	Let $\Pi_i({\bf c}') = \arg\max_{{\bf p'}, \pi'}\sum_{i\in N} v_i(c_i, p_i')\pi'_i({\bf c}', {\bf p'})$. We say an auction mechanism $\mathcal{M}$ is efficient (EF) if for all reports $({\bf c}', {\bf p})$ it maximizes $W({\bf c}', {\bf p}, \mathcal{M})$ and ${\bf p}\in\Pi_i({\bf c}')$. 
	\begin{defn}\label{efficient_allocation}
		An auction mechanism $\mathcal{M}$ is \emph{efficient (EF)} if for all reports $({\bf c}', {\bf p})$
		\begin{equation}
			\mathcal{M} \in {\arg\max}_{\mathcal{M}'} W({\bf c}', {\bf p}, \mathcal{M}'),
		\end{equation}
		and the reported prices ${\bf p}\in \Pi_i({\bf c}')$.
	\end{defn}
	In other words, the EF-RP property only asks the mechanism to maximize the social welfare with every reports and the reported prices may not belong to $\Pi_i({\bf c}')$, and the EF property requires the mechanism to maximize the social welfare with every reports and the reported prices  must lie in $\Pi_i({\bf c}')$.
	Clearly, if an auction mechanism is EF, it is also EF-RP, but the reverse is not true.
	We next define several other properties that an auction mechanism should satisfy.
	
	\begin{defn}
		$\mathcal{M}=(\pi, x)$ is \emph{incentive compatible} (IC) if for all $i$, all $c_i$, all ${\bf p}$, and all ${\bf c}^\prime$,
		\begin{equation}
			u_i\big(c_i, (c_i,{\bf c}_{-i}^\prime), {\bf p}, \mathcal{M}\big) \geq u_i\big(c_i, (c_i',{\bf c}_{-i}^\prime), {\bf p}, \mathcal{M}\big).
		\end{equation}
	\end{defn}
	That is, incentive compatibility requires that submitting true product costs is a dominant strategy for all advertisers. Another important concept is called individual rationality. This property guarantees that each advertiser will not receive a negative utility when revealing her product cost truthfully. 
	
	\begin{defn}
		$\mathcal{M}=(\pi, x)$ is \emph{individually rational} (IR) if for all $i$, all $c_i$, all ${\bf p}$, and all ${\bf c}_{-i}^\prime$, 
		\begin{equation}
			u_i\big(c_i, (c_i,{\bf c}_{-i}^\prime), {\bf p}, \mathcal{M}\big) \geq 0.
		\end{equation}
	\end{defn}
	Note that if an auction mechanism violates the IR property, some advertisers may obtain negative utilities when revealing their true types, in which case quitting the auction is the best response. Therefore, individual rationality is also known as the participation constraint. Given advertisers' reports $({\bf c'}, {\bf p})$, the platform's revenue generated by an auction mechanism $\mathcal{M}$ is the sum of all advertisers' payments, denoted by $R({\bf c}', {\bf p}, \mathcal{M})=\sum_{i\in N}x_i({\bf c}', {\bf p})$. A major reason of adopting an auction to sell the display opportunities is to improve the performance of the advertising system, especially to increase the platform's revenue, so an auction with an external subsidy is not compelling.
	
	\begin{defn}
		$\mathcal{M}=(\pi, x)$ is \emph{weakly budget balanced} (WBB) if for all reports $({\bf c}', {\bf p})$, \begin{equation}
			R({\bf c}', {\bf p}, \mathcal{M}) \geq 0.
		\end{equation}
	\end{defn}
	
	In the following contents, we study auctions that satisfy IC, IR and other desired properties in the presence of display prices. We first provide a characterization for all IC and IR auction mechanisms, then using this characterization to design auctions towards different objectives. 

	\section{Characterizations of IC and IR Auctions with Display Prices}
	In this section, we characterize all IC and IR auctions in the presence of display prices.
	We first present two necessary properties that an IC mechanism $(\pi, x)$ should hold, then show the two properties are also sufficient conditions for an auction mechanism to be IC.
	
	\begin{lemma}\label{non_inc_c}
		If an auction mechanism $(\pi,x)$ is IC, then $\pi_i$ is non-increasing in $c_i'$ for all $i$, all ${\bf p}$ and all ${\bf c}'_{-i}$.
	\end{lemma}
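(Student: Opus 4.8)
The plan is to use the classical two-report exchange argument for monotonicity in single-parameter mechanism design, adapted to the value function $v_i(c_i,p_i)=(p_i-c_i)\lambda_i(p_i)$. Fix an advertiser $i$, a price profile ${\bf p}$, and the competitors' reports ${\bf c}'_{-i}$, and abbreviate the induced allocation and payment of $i$ under a reported cost $t$ by $\pi_i(t):=\pi_i((t,{\bf c}'_{-i}),{\bf p})$ and $x_i(t):=x_i((t,{\bf c}'_{-i}),{\bf p})$. Take two cost levels $a<b$ and regard each of them, in turn, as $i$'s true cost.

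First I would write down the two IC constraints obtained by making the two honest types deviate toward each other. With true cost $a$, truthful reporting weakly dominates reporting $b$; with true cost $b$, truthful reporting weakly dominates reporting $a$. Concretely,
\begin{align}
v_i(a,p_i)\pi_i(a)-x_i(a) &\ge v_i(a,p_i)\pi_i(b)-x_i(b),\\
v_i(b,p_i)\pi_i(b)-x_i(b) &\ge v_i(b,p_i)\pi_i(a)-x_i(a).
\end{align}
Adding the two inequalities cancels all four payment terms and, after collecting the allocation differences, yields
\begin{equation}
\big(v_i(a,p_i)-v_i(b,p_i)\big)\big(\pi_i(a)-\pi_i(b)\big)\ge 0.
\end{equation}

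The final step is to determine the sign of the value gap. Here I would exploit the explicit form of $v_i$: for fixed $p_i$ it is affine in the cost, with $v_i(a,p_i)-v_i(b,p_i)=(b-a)\lambda_i(p_i)$. Since $a<b$ and $\lambda_i(p_i)\ge 0$, this gap is nonnegative, and whenever it is strictly positive it forces $\pi_i(a)\ge\pi_i(b)$, i.e. a higher reported cost cannot increase $i$'s allocation, which is exactly the non-increasing property. Because $a<b$ and ${\bf p},{\bf c}'_{-i}$ were arbitrary, monotonicity holds everywhere.

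I expect the only delicate point — and the only place where this setting departs from the textbook argument — to be the degenerate case $\lambda_i(p_i)=0$. There $v_i$ is identically zero in the cost, the exchange inequality collapses to $0\ge 0$, and it carries no information about $\pi_i$; but the allocation is then irrelevant to $i$'s incentives, so one either invokes the standing assumption that conversion rates are strictly positive or treats this case as vacuous. Everywhere else the strict decrease of $v_i$ in $c_i$ drives the conclusion, so no single-crossing hypothesis beyond the explicit product form of $v_i$ is needed.
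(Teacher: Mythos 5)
Your proof is correct and is essentially the paper's own argument: the same two-report exchange, summing the paired IC constraints to cancel payments, and concluding from the monotonicity of $v_i$ in the cost. The only difference is that you explicitly flag the degenerate case $\lambda_i(p_i)=0$, which the paper glosses over by simply asserting that $v_i$ is decreasing in $c_i$ --- a minor point of extra care, not a different approach.
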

	\begin{proof}
		Consider two reported types $c_i^1$ and $c_i^2$ of advertiser $i$ with $c_i^1>c_i^2$. Incentive compatibility implies that for all ${\bf p}$ and ${\bf c}'_{-i},$
		\begin{align*}
			&v_i(c_i^1, p_i)\pi(c_i^1, {\bf c}'_{-i}, {\bf p})-x_i(c_i^1, {\bf c}'_{-i}, {\bf p})\ge \\
			&v_i(c_i^1, p_i)\pi(c_i^2, {\bf c}'_{-i}, {\bf p})-x_i(c_i^2, {\bf c}'_{-i}, {\bf p})
		\end{align*} and
		\begin{align*}
			&v_i(c_i^2, p_i)\pi(c_i^2, {\bf c}'_{-i}, {\bf p})-x_i(c_i^2, {\bf c}'_{-i}, {\bf p})\ge\\
			&v_i(c_i^2, p_i)\pi(c_i^1, {\bf c}'_{-i}, {\bf p})-x_i(c_i^1, {\bf c}'_{-i}, {\bf p}).
		\end{align*}
		Adding above two inequalities, we obtain
		\begin{small}
			\begin{align*}
				(v_i(c_i^1, p_i)-v_i(c_i^2, p_i))(\pi_i(c_i^1, {\bf c}'_{-i}, {\bf p})-\pi_i(c_i^2, {\bf c}'_{-i}, {\bf p}))\ge 0.
			\end{align*}
		\end{small}
		Recall that $v_i(c_i, p_i)$ is decreasing in $c_i$, therefore the above inequality implies that
		$$\pi_i(c_i^1, {\bf c}'_{-i}, {\bf p})\le \pi_i(c_i^2, {\bf c}'_{-i}, {\bf p}).$$
		That is, $\pi_i$ is non-increasing with $c_i'$ in any IC auction.
	\end{proof}
	
	Lemma \ref{ic_subg} unfolds the interconnections of the payment policy and the allocation policy.  It shows that in any IC auction, the allocation policy determines the payment policy.
	\begin{lemma}\label{ic_subg}
		If an auction mechanism $(\pi, x)$ is IC, then for all $i$, all ${\bf p}$ and all ${\bf c}'_{-i}$, $x_i(c_i, {\bf c}'_{-i}, {\bf p})$ can be formulated as
		\begin{small}
			\begin{align}
				&v_i(c_i, p_i)\pi_i(c_i, {\bf c}'_{-i}, {\bf p})-\lambda_i(p_i)\int_{c_i}^{\overline{c}_i}\pi_i(z, {\bf c}'_{-i}, {\bf p})dz\nonumber\\
				&- {U}_i({\bf c}'_{-i}, {\bf p}), \label{payment}
			\end{align}
		\end{small}
		where ${U}_i({\bf c}'_{-i}, {\bf p})$ is independent of $i$'s cost report.
		%
	\end{lemma}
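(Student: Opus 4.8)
The plan is to adapt the standard Myerson envelope argument to the present setting, in which the per-impression value $v_i(c_i,p_i)=(p_i-c_i)\lambda_i(p_i)$ is affine and decreasing in the cost. Throughout I fix the competitors' reports ${\bf c}'_{-i}$ and the entire price profile ${\bf p}$, and abbreviate $\pi(t):=\pi_i(t,{\bf c}'_{-i},{\bf p})$ and $x(t):=x_i(t,{\bf c}'_{-i},{\bf p})$. Let $U(c):=v_i(c,p_i)\pi(c)-x(c)$ denote the utility advertiser $i$ obtains when her true cost is $c$ and she reports it truthfully. The crucial structural fact, replacing the linearity of value in the classical single-parameter model, is the identity $v_i(c,p_i)-v_i(c',p_i)=(c'-c)\lambda_i(p_i)$; because $\lambda_i(p_i)$ is a constant once ${\bf p}$ is fixed, the problem collapses to a one-dimensional screening problem with effective type $\lambda_i(p_i)\,c$.

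Next I would convert IC into a pair of one-sided bounds. Writing the deviation utility as $v_i(c,p_i)\pi(c')-x(c')=U(c')+\big(v_i(c,p_i)-v_i(c',p_i)\big)\pi(c')=U(c')+(c'-c)\lambda_i(p_i)\pi(c')$, the IC constraint becomes $U(c)\ge U(c')+(c'-c)\lambda_i(p_i)\pi(c')$. Swapping the roles of $c$ and $c'$ yields the reverse inequality, and together they sandwich the difference quotient of $U$ between $-\lambda_i(p_i)\pi(c)$ and $-\lambda_i(p_i)\pi(c')$. Hence $U$ is Lipschitz (the bound is uniform because $\pi\in\{0,1\}$), therefore absolutely continuous, and wherever it is differentiable one has $U'(c)=-\lambda_i(p_i)\pi(c)$.

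Finally, I would integrate. By the fundamental theorem of calculus for absolutely continuous functions, $U(c_i)=U(\overline{c}_i)+\lambda_i(p_i)\int_{c_i}^{\overline{c}_i}\pi(z)\,dz$. Substituting $U(c_i)=v_i(c_i,p_i)\pi(c_i)-x(c_i)$ and solving for $x$ yields precisely the claimed formula with $U_i({\bf c}'_{-i},{\bf p}):=U(\overline{c}_i)$, which is manifestly independent of $i$'s own cost report. The only delicate point is the passage from the local difference-quotient bounds to the global integral representation: since $\pi$ is $\{0,1\}$-valued it need not be continuous, so $U$ cannot be assumed everywhere differentiable. This is settled by invoking Lemma~\ref{non_inc_c}, which guarantees that $\pi(\cdot)$ is monotone, hence Riemann integrable with at most countably many discontinuities; the relation $U'=-\lambda_i(p_i)\pi$ then holds almost everywhere and the integral is well-defined. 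Thus the envelope/absolute-continuity step, rather than any computation, is the heart of the argument.
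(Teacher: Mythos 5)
Your proof is correct and follows essentially the same route as the paper: both derive the envelope condition $U'(c_i)=-\lambda_i(p_i)\pi_i(c_i,{\bf c}'_{-i},{\bf p})$, integrate it over $[c_i,\overline{c}_i]$, and set $U_i({\bf c}'_{-i},{\bf p})$ equal to the utility of the highest-cost type $\overline{c}_i$ before solving for $x_i$. The only difference is one of rigor, in your favor: where the paper simply invokes ``the envelope theorem,'' you derive the envelope identity from first principles via the two-sided IC inequalities, the resulting Lipschitz/absolute-continuity of $U$, and the monotonicity of $\pi_i$ from Lemma~\ref{non_inc_c} to justify the almost-everywhere identification $U'=-\lambda_i(p_i)\pi$ and the integral representation.
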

	\begin{proof}
		Given any IC mechanism $\mathcal{M}=(\pi, x)$, based on Def. 4, we have that for all $i$, all ${\bf p}$ and all ${\bf c}'_{-i}$ the following equation must hold:
		\begin{small}
			\begin{align*}
				u_i(c_i, (c_i, {\bf c}'_{-i}), {\bf p})=\max_{c_i'}v_i(c_i', p_i)\pi_i(c_i', {\bf c}'_{-i}, {\bf p})-x_i(c_i', {\bf c}'_{-i}, {\bf p}).
			\end{align*}
		\end{small}
		By the envelope theorem, this equation is equivalent to the following condition:
		\begin{small}
			\begin{align}
				&\frac{\partial{u_i(c_i, (c_i, {\bf c}'_{-i}), {\bf p})}}{\partial{c_i}} \nonumber\\
				&=	\frac{\partial{(v_i(c_i', p_i)\pi_i(c_i', {\bf c}'_{-i}, {\bf p})-x_i(c_i', {\bf c}'_{-i}, {\bf p}))}}{\partial{c_i}}|_{c_i'=c_i} \nonumber\\
				&=-\lambda_i(p_i)\pi_i(c_i, {\bf c}'_{-i}, {\bf p}). \label{uti_diff}
			\end{align}
		\end{small}
		Integrating both sides of formula (\ref{uti_diff}) over $[c_i, \overline{c}_i]$ on $c_i$, we have that $i$'s utility in an IC auction can be denoted by
		\begin{small}
			\begin{align*}
				u_i(c_i, (c_i, {\bf c}'_{-i}), {\bf p})={U}_i({\bf c}'_{-i}, {\bf p})+\lambda_i(p_i)\int_{c_i}^{\overline{c}_i}\pi_i(z, {\bf c}'_{-i}, {\bf p})dz,
			\end{align*}
		\end{small}
		where ${U}_i({\bf c}'_{-i}, {\bf p})=u_i(\overline{c}_i, (\overline{c}_i, {\bf c}'_{-i}), {\bf p})$ is independent of advertiser $i$'s cost report. 
		Based on formula (\ref{uti_fun}), we can further get that in an IC auction $(\pi, x)$ advertiser $i$'s payment $x_i(c_i, {\bf c}'_{-i}, {\bf p})$ can be formulated as 
		\begin{small}
			\begin{align*}
				&v_i(c_i, p_i)\pi_i(c_i, {\bf c}'_{-i}, {\bf p})-\lambda_i(p_i)\int_{c_i}^{\overline{c}_i}\pi_i(z, {\bf c}'_{-i}, {\bf p})dz\nonumber\\
				&- {U}_i({\bf c}'_{-i}, {\bf p}).
			\end{align*}
		\end{small}
	\end{proof}
	As $\lambda_i(p_i)$ and $\pi_i(c_i, {\bf c}'_{-i}, {\bf p})$ are non-negative, Lemma \ref{ic_subg} also suggests that an advertiser's utility is non-increasing with her product cost. 
	Our next result indicates that the above two conditions are also sufficient for an auction to be IC.
	\begin{theorem}\label{ic_cha}
		An auction mechanism $(\pi, x)$ is IC if and only if for all $i$, all ${\bf p}$ and all ${\bf c}'_{-i}$:
		\begin{itemize}
			\item [1.] $\pi_i$ is non-increasing in $c_i$;
			\item [2.] $x_i$ can be formulated as
			\begin{small}
				\begin{align*}
					&v_i(c_i, p_i)\pi_i(c_i, {\bf c}'_{-i}, {\bf p})-\lambda_i(p_i)\int_{c_i}^{\overline{c}_i}\pi_i(z, {\bf c}'_{-i}, {\bf p})dz\nonumber\\
					&- {U}_i({\bf c}'_{-i}, {\bf p}).
				\end{align*}
			\end{small}
		\end{itemize}
	\end{theorem}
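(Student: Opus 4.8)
The plan is to prove the two directions separately. The ``only if'' direction is immediate: if $(\pi,x)$ is IC, then condition~1 is exactly the conclusion of Lemma~\ref{non_inc_c} and condition~2 is exactly the conclusion of Lemma~\ref{ic_subg}, so I would simply invoke both lemmas. All the substantive work lies in the ``if'' direction, where I assume conditions~1 and~2 and must verify the IC inequality from Definition~4.

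For sufficiency, fix $i$, the true cost $c_i$, the prices ${\bf p}$, the competitors' reports ${\bf c}'_{-i}$, and an arbitrary misreport $c_i'$; the goal is $u_i(c_i,(c_i,{\bf c}'_{-i}),{\bf p})\ge u_i(c_i,(c_i',{\bf c}'_{-i}),{\bf p})$. The first step is to substitute the payment formula of condition~2 into the utility definition~(\ref{uti_fun}). Under a truthful report the two $v_i\pi_i$ terms cancel, leaving the clean form
\[
u_i(c_i,(c_i,{\bf c}'_{-i}),{\bf p})=\lambda_i(p_i)\int_{c_i}^{\overline{c}_i}\pi_i(z,{\bf c}'_{-i},{\bf p})\,dz+U_i({\bf c}'_{-i},{\bf p}).
\]
Under the misreport, $\pi_i$ and $x_i$ are evaluated at $c_i'$ while the value $v_i$ is still evaluated at the true $c_i$, so the cancellation is incomplete and a residual term $\big(v_i(c_i,p_i)-v_i(c_i',p_i)\big)\pi_i(c_i',{\bf c}'_{-i},{\bf p})$ survives. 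The key observation is that, because $\lambda_i$ depends only on the price, $v_i(c_i,p_i)-v_i(c_i',p_i)=(c_i'-c_i)\lambda_i(p_i)$ is linear in the cost with constant slope $-\lambda_i(p_i)$.

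Forming the difference of the two utilities and merging the two integrals then reduces everything to
\[
u_i(c_i,(c_i,{\bf c}'_{-i}),{\bf p})-u_i(c_i,(c_i',{\bf c}'_{-i}),{\bf p})=\lambda_i(p_i)\Big[\int_{c_i}^{c_i'}\pi_i(z,{\bf c}'_{-i},{\bf p})\,dz-(c_i'-c_i)\pi_i(c_i',{\bf c}'_{-i},{\bf p})\Big].
\]
Since $\lambda_i(p_i)\ge 0$, it suffices to show the bracket is nonnegative, and this is the main (though standard) obstacle. I would dispatch it by a two-case monotonicity argument using condition~1. If $c_i'>c_i$, then on $[c_i,c_i']$ monotonicity gives $\pi_i(z,\cdot)\ge\pi_i(c_i',\cdot)$, so the integral dominates the rectangle $(c_i'-c_i)\pi_i(c_i',\cdot)$. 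If $c_i'<c_i$, both the integral orientation and the factor $(c_i'-c_i)$ flip sign, and the reversed bound $\pi_i(z,\cdot)\le\pi_i(c_i',\cdot)$ on $[c_i',c_i]$ again makes the bracket nonnegative. In either case the difference is $\ge0$, which establishes IC.

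The only delicate points I anticipate are bookkeeping rather than conceptual: keeping straight that $v_i$ is always evaluated at the true cost while $\pi_i$ and $x_i$ are evaluated at the report, and correctly handling the orientation of $\int_{c_i}^{c_i'}$ when $c_i'<c_i$. The linearity of $v_i$ in the cost, with slope exactly $-\lambda_i(p_i)$, is precisely what turns this into a clean Myerson-style monotonicity argument and removes any need for convexity or differentiability assumptions on $\pi_i$.
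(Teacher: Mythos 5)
Your proposal is correct and follows essentially the same route as the paper: necessity via Lemmas~\ref{non_inc_c} and~\ref{ic_subg}, and sufficiency by substituting the payment formula into the utility, reducing the IC inequality to comparing $\int_{c_i}^{c_i'}\pi_i(z,{\bf c}'_{-i},{\bf p})\,dz$ against the rectangle $(c_i'-c_i)\pi_i(c_i',{\bf c}'_{-i},{\bf p})$, and closing with the same two-case monotonicity argument. Your explicit use of the linearity $v_i(c_i,p_i)-v_i(c_i',p_i)=(c_i'-c_i)\lambda_i(p_i)$ is exactly the cancellation implicit in the paper's simplification to its inequality~(\ref{c_c_i}).
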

\begin{proof}
	To prove this theorem, it suffices to prove that if an auction mechanism $\mathcal{M}=(\pi, x)$ satisfies Condition $1$ and $2$, then it is IC. 
	Given an advertiser $i$ with true cost $c_i$, to prove IC, we need to show that the inequality 
	\begin{small}
		\begin{align}
			u_i(c_i, (c_i, {\bf c}'_{-i}), {\bf p})\ge u_i(c_i, (c_i', {\bf c}'_{-i}), {\bf p}) \label{ic_inequ}
		\end{align}
	\end{small}	
	holds for all $c_i'\ne c_i$, all ${\bf p}$ and all ${\bf c}'_{-i}$. Plugging in the formula of $x_i(c_i, {\bf c}'_{-i}, {\bf p})$ and making simplification, the inequality (\ref{ic_inequ}) can be reformulated as
	\begin{small}
		\begin{align}
			\int_{c_i}^{\overline{c}_i}\pi_i(z, {\bf c}'_{-i}, {\bf p})dz&\ge \pi_i(c_i', {\bf c}'_{-i}, {\bf p})(c_i'-c_i)\nonumber \\&+\int_{c_i'}^{\overline{c}_i}\pi_i(z, {\bf c}'_{-i}, {\bf p})dz. \label{c_c_i}
		\end{align}
	\end{small}	
	
	{\bf Case 1:} If $c_i'>c_i$, the inequality (\ref{c_c_i}) is equivalent to 
	\begin{small}
		\begin{align}
			\pi_i(c_i', {\bf c}'_{-i}, {\bf p})(c_i'-c_i)\le \int_{c_i}^{c_i'}\pi_i(z, {\bf c}'_{-i}, {\bf p})dz,
		\end{align}
	\end{small}
	which is true under Condition $1$.
	
	{\bf Case 2:} If $c_i'<c_i$, the inequality (\ref{c_c_i}) is equivalent to 
	\begin{small}
		\begin{align}
			\pi_i(c_i', {\bf c}'_{-i}, {\bf p})(c_i-c_i')\ge \int_{c_i'}^{c_i}\pi_i(z, {\bf c}'_{-i}, {\bf p})dz,
		\end{align}
	\end{small}
	which is also true under Condition $1$.
\end{proof}
	Besides the IC property, another desired property is individual rationality, which requires all advertiser's utility to be non-negative when acting truthfully.

	\begin{theorem}\label{ir_cha}
		An IC auction mechanism $(\pi, x)$ is IR if and only if for all $i$, all ${\bf p}$ and all ${\bf c}'_{-i}$, 
		\begin{small}\begin{equation}
				{U}_i({\bf c}'_{-i}, {\bf p})\ge 0.
		\end{equation}\end{small}
	\end{theorem}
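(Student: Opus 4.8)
The plan is to leverage the closed-form expression for an IC advertiser's truthful utility that falls out of Lemma~\ref{ic_subg}. Recall from the proof of that lemma that in any IC mechanism the truthful utility admits the representation
\begin{align*}
u_i(c_i, (c_i, {\bf c}'_{-i}), {\bf p}) = {U}_i({\bf c}'_{-i}, {\bf p}) + \lambda_i(p_i)\int_{c_i}^{\overline{c}_i}\pi_i(z, {\bf c}'_{-i}, {\bf p})\,dz,
\end{align*}
where ${U}_i({\bf c}'_{-i}, {\bf p}) = u_i(\overline{c}_i, (\overline{c}_i, {\bf c}'_{-i}), {\bf p})$ is exactly the truthful utility of the highest-cost type. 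Since IR is precisely the statement that this utility is non-negative for every $c_i$, every ${\bf p}$, and every ${\bf c}'_{-i}$, the equivalence reduces to an elementary sign analysis of the two terms on the right-hand side.

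For the necessity direction I would instantiate the IR inequality at the top of the cost interval, $c_i = \overline{c}_i$. There the integral $\int_{\overline{c}_i}^{\overline{c}_i}\pi_i(z, {\bf c}'_{-i}, {\bf p})\,dz$ vanishes, so the utility collapses to exactly ${U}_i({\bf c}'_{-i}, {\bf p})$; IR evaluated at this particular type therefore forces ${U}_i({\bf c}'_{-i}, {\bf p}) \ge 0$ for all ${\bf p}$ and all ${\bf c}'_{-i}$.

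For the sufficiency direction I would observe that each ingredient of the utility expression is non-negative: $\lambda_i(p_i) \in [0,1]$ is a conversion probability, $\pi_i \in \{0,1\}$ is an allocation indicator, and the integral runs from $c_i$ up to $\overline{c}_i \ge c_i$, so the integrand is integrated over an interval of non-negative length. Hence whenever ${U}_i({\bf c}'_{-i}, {\bf p}) \ge 0$ the whole utility is a sum of non-negative quantities, and IR holds for every type $c_i$, every ${\bf p}$, and every ${\bf c}'_{-i}$.

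There is no genuine obstacle here; the only subtlety is recognizing that ${U}_i({\bf c}'_{-i}, {\bf p})$ is not an abstract integration constant but equals the truthful utility of the worst (highest-cost) type, which is exactly what lets the boundary evaluation at $c_i = \overline{c}_i$ pin it down. Everything else is the observation that conversion rates, allocation indicators, and the length of $[c_i, \overline{c}_i]$ are all non-negative.
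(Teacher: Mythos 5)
Your proof is correct and follows essentially the same route as the paper's: both rely on the representation $u_i(c_i,(c_i,{\bf c}'_{-i}),{\bf p}) = U_i({\bf c}'_{-i},{\bf p}) + \lambda_i(p_i)\int_{c_i}^{\overline{c}_i}\pi_i(z,{\bf c}'_{-i},{\bf p})\,dz$ from Lemma~\ref{ic_subg}, obtain sufficiency by non-negativity of each term, and obtain necessity by evaluating at the highest-cost type $\overline{c}_i$ (the paper phrases this as a contrapositive, you phrase it directly, but it is the same argument).
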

	\begin{proof}
		(``$\Rightarrow$'') For any IC auction mechanism $\mathcal{M}$, we have that advertiser $i$'s utility $u_i(c_i, (c_i, {\bf c}'_{-i}), {\bf p})$ is
		\begin{small}
			\begin{align*}
				{U}_i({\bf c}'_{-i}, {\bf p})+\lambda_i(p_i)\int_{c_i}^{\overline{c}_i}\pi_i(z, {\bf c}'_{-i}, {\bf p})dz.
		\end{align*}\end{small}
		Since $\lambda_i(p_i)\ge 0$ and $\pi_i(z, {\bf c}'_{-i}, {\bf p})$ is non-negative, we know that if ${U}_i({\bf c}'_{-i}, {\bf p})\ge 0$, then $u_i(c_i, (c_i, {\bf c}'_{-i}), {\bf p})\ge 0$.
		
		(``$\Leftarrow$'') If ${U}_i({\bf c}'_{-i}, {\bf p})< 0$ for some ${\bf p}$ and ${\bf c}'_{-i}$, then an advertiser with a report $(\overline{c}_i, p_i)$ will obtain a negative utility, which violates the IR property.
	\end{proof}
	
	Theorem \ref{ic_cha} and \ref{ir_cha} show the importance of the price information in auction design. In order to incentivize advertisers to report their product costs truthfully, the selling platform need take the display prices into consideration when designing ad auctions.
	As the platform acts as a profit-maximizing agent, for convenience's sake the term ${U}_i({\bf c}_{-i}, {\bf p})$ will be treated as zero hereafter for all $i$, all ${\bf c}_{-i}$ and all ${\bf p}$, without violating the IC and IR properties. 
	
	To evaluate the performance of a given auction, we need to figure out how each advertiser submits her display price. If the display prices are exogenously determined, the techniques for traditional auctions can apply here. Otherwise, we need to compute the display prices in equilibrium at first. In the following, we first study auctions with non-strategic display prices, then turn to the general scenario where the advertisers are allowed to strategize their display prices.
	\section{Auction Design with Non-Strategic Display Prices}
	To keep and improve advertising performance, in some commerce activities advertisers seldom adjust their display prices frequently, especially when they also have offline brick-and-mortar shops. In this section, we investigate the auction design problem under the assumption of non-strategic display prices, i.e., the reported prices are exogenously determined and not affected by the underlying ad auctions.
	
	\subsection{Welfare-Maximizing Auctions with Reported Prices}
	Social welfare reflects the efficiency of a given allocation, which is defined as the summation of all agents' utilities. Based on Theorem \ref{ic_cha} and Theorem \ref{ir_cha}, we next propose an auction mechanism, called welfare maximizer with reported prices (short for WM-RP), to maximize the social welfare. As the reported display prices are assumed to be independent of the auctions, we adopt the concept of EF-RP (see Def. \ref{efficient_allocation_p}) to characterize social welfare maximization.
	
	
	
	\begin{framed}
		\noindent\textbf{Welfare Maximizer with Reported Prices (WM-RP)}\\
		\rule{\textwidth}{0.5pt}
		\begin{itemize}
			\item \textbf{Allocation policy:} Given reports $({\bf c}', {\bf p})$, allocate the ad slot to maximize $\sum_{i=1}^{N}v_i(c_i', p_i)\pi_i({\bf c}', {\bf p})$, break tie arbitrarily.
			\item \textbf{Payment policy:} For all advertiser $i\in N$, her payment $x_i({\bf c}', {\bf p})$ is defined below:
			\begin{itemize}
				\item  if $\pi_i({\bf c}', {\bf p})=0$, then $x_i({\bf c}', {\bf p})=0$;
				\item if $\pi_i({\bf c}', {\bf p})=1$, then $x_i({\bf c}', {\bf p})$
				is defined as $$v_i(v_i^{-1}(v^{(2)}({\bf c}', {\bf p}),p_i), p_i),$$
			\end{itemize}
			where $v_i^{-1}$ is the inverse function of $v_i$ w.r.t. $c_i$ and $v^{(2)}({\bf c}', {\bf p})$ denotes the second highest value.
		\end{itemize}
	\end{framed}
	
	In the WM-RP, the slot is allocated to the advertiser with the highest reported value, and only the winning advertiser pays to the platform. Next, we prove that the WM-RP maximizes the social welfare for any reported display price profile.
	
	
	\begin{prop}\label{vm_result}
		The WM-RP is IC, IR and EF-RP.
	\end{prop}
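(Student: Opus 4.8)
The plan is to verify the three properties (EF-RP, IC, IR) in turn, leaning on the characterization established in Theorems \ref{ic_cha} and \ref{ir_cha}. The EF-RP property is almost immediate from the definition of the allocation policy: since WM-RP allocates the single slot to the advertiser with the highest reported value $v_i(c_i',p_i)=(p_i-c_i')\lambda_i(p_i)$, and there is exactly one slot to award, this allocation is by construction in $\arg\max_{\mathcal{M}'} W({\bf c}',{\bf p},\mathcal{M}')$ for every report profile $({\bf c}',{\bf p})$. I would just observe that $\sum_i v_i(c_i',p_i)\pi_i({\bf c}',{\bf p})$ with the constraint $\sum_i \pi_i \le 1$ is maximized by putting full weight on the largest $v_i$.

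For IC, the strategy is to check that the WM-RP allocation and payment rules satisfy the two conditions of Theorem \ref{ic_cha}, which are sufficient for incentive compatibility. Condition 1 (monotonicity) holds because as advertiser $i$ raises her reported cost $c_i'$, her reported value $v_i(c_i',p_i)$ weakly decreases (recall $v_i$ is decreasing in $c_i$), so $\pi_i$ can only switch from $1$ to $0$, never the reverse; hence $\pi_i$ is non-increasing in $c_i'$. For Condition 2, I would show the stated payment $v_i(v_i^{-1}(v^{(2)}({\bf c}',{\bf p}),p_i),p_i)=v^{(2)}({\bf c}',{\bf p})$ coincides with the integral-form payment of Theorem \ref{ic_cha} when $U_i\equiv 0$. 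Concretely, when $i$ wins, $\pi_i(z,{\bf c}'_{-i},{\bf p})=1$ for all costs $z$ below the threshold cost $c_i^*$ at which $i$'s value drops to the second-highest value $v^{(2)}$, and $0$ above it; so $\lambda_i(p_i)\int_{c_i}^{\overline{c}_i}\pi_i(z,\cdot)\,dz=\lambda_i(p_i)(c_i^*-c_i)$, and substituting into the Theorem \ref{ic_cha} formula gives $v_i(c_i,p_i)-\lambda_i(p_i)(c_i^*-c_i)=(p_i-c_i)\lambda_i(p_i)-\lambda_i(p_i)(c_i^*-c_i)=(p_i-c_i^*)\lambda_i(p_i)=v_i(c_i^*,p_i)=v^{(2)}$. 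Since $v_i(c_i^*,p_i)=v^{(2)}$ is exactly $v_i(v_i^{-1}(v^{(2)},p_i),p_i)$, the two payment expressions agree, so WM-RP satisfies Condition 2 and is therefore IC.

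IR then follows from Theorem \ref{ir_cha}: having set $U_i\equiv 0$, the condition $U_i({\bf c}'_{-i},{\bf p})\ge 0$ holds trivially, so the IC mechanism is IR. Alternatively I can argue directly that the winner's utility is $v_i(c_i,p_i)-v^{(2)}\ge 0$ since winning means $v_i(c_i,p_i)\ge v^{(2)}$, and a loser pays nothing and gets zero utility.

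The step I expect to require the most care is the payment-equivalence computation in the IC argument, specifically establishing that the threshold cost $c_i^*$ is well defined as $v_i^{-1}(v^{(2)},p_i)$ and that the indicator $\pi_i(z,\cdot)$ really is a clean step function in $z$. This needs $v_i(\cdot,p_i)$ to be continuous and strictly monotone in $c_i$ so the inverse exists, together with the tie-breaking convention; once monotonicity of $v_i$ in $c_i$ is invoked these are routine, but they are the substantive glue linking the explicit WM-RP payment to the integral characterization.
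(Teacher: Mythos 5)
Your proposal is correct and follows essentially the same route as the paper's own proof: verify the two conditions of Theorem \ref{ic_cha} (monotonicity of the allocation, and the threshold-cost computation showing the winner's payment $v_i(v_i^{-1}(v^{(2)},p_i),p_i)$ matches the integral formula with $U_i\equiv 0$), then get IR from Theorem \ref{ir_cha}. The only cosmetic differences are that you spell out the EF-RP observation and the direct IR argument explicitly, while the paper adds a one-line check that losers' payments are zero under the formula; both are immediate.
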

	\begin{proof}
		To prove this proposition it is sufficient to show that the WM-RP is IC and IR. Firstly, it is straightforward that the allocation policy is non-increasing in $c_i$ for all $i$, all ${\bf p}$ and all ${\bf c}_{-i}$, so the first condition of Theorem \ref{ic_cha} is satisfied. Secondly, we show that the payment policy is identical to (\ref{payment}) with $U_i({\bf c}'_{-i}, {\bf p})=0$. According to the allocation policy, the slot will be allocated to the one with the highest value $v_i(c_i', p_i)$. Given reports $({\bf c}', {\bf p})$, let $v^{(2)}({\bf c}', {\bf p})$ be the second highest value under $({\bf c}', {\bf p})$. For all losers $i$, $\pi_i({\bf c}', {\bf p})=0$ for all $c_i''\ge c_i'$ and therefore $i$'s payment $x_i({\bf c}', {\bf p})$ is zero according to (\ref{payment}). For the winner $i$, we know that her allocation $\pi_i({\bf c}', {\bf p})=1$ as long as 
		\begin{small}
			\begin{equation*}
				v_i(c_i', p_i)\ge v^{(2)}({\bf c}', {\bf p}),
			\end{equation*}
		\end{small}
		which is equivalent to the condition that 
		\begin{small}
			\begin{equation*}
				c_i'\le v_i^{-1}(v^{(2)}({\bf c}', {\bf p}), p_i),
			\end{equation*}
		\end{small}
		where $v_i^{-1}$ is the inverse function of $v_i$ w.r.t $c_i$ (recall that $v_i$ is non-increasing in $c_i$, so the reverse function $v_i^{-1}$ is existing). Let $U_i({\bf c}'_{-i}, {\bf p})=0$ for all $i$, all ${\bf c}'_{-i}$ and all ${\bf p}$. Then according to (\ref{payment}), $i$'s payment $x_i({\bf c}', {\bf p})$ is identical to
		\begin{small}
			\begin{align*}
				&v_i(c_i', p_i)\pi_i(c_i', {\bf c}'_{-i}, {\bf p}) -\lambda_i(p_i)\int_{c_i'}^{\overline{c}_i}\pi_i(z, {\bf c}'_{-i}, {\bf p})dz\\
				&=v_i(c_i', p_i) -\lambda_i(p_i)\int_{c_i'}^{v_i^{-1}(v^{(2)}({\bf c}', {\bf p}), p_i)}\mathrm{1}dz\\
				&-\lambda_i(p_i)\int_{v_i^{-1}(v^{(2)}({\bf c}', {\bf p}), p_i)}^{\overline{c}_i}0dz\\
				&=v_i(c_i', p_i) -\lambda_i(p_i)(v_i^{-1}(v^{(2)}({\bf c}', {\bf p}), p_i)-c_i')\\
				&=v_i(v_i^{-1}(v^{(2)}({\bf c}', {\bf p}),p_i), p_i).
			\end{align*}
		\end{small}
	\end{proof}
	As $v_i^{-1}$ is the inverse function of $v_i$, the winner's payment $v_i(v_i^{-1}(v^{(2)}({\bf c}', {\bf p}),p_i), p_i)$ is exactly $v^{(2)}({\bf c}', {\bf p})$$-$the second highest reported value.
	Recall that given any monotonic allocation policy, the only freedom of designing an IC auction is the choices of $U_i({\bf c}'_{-i}, {\bf p})$. Since IR requires $U_i({\bf c}'_{-i}, {\bf p})\ge 0$ and we set $U_i({\bf c}'_{-i}, {\bf p})$ to be zero in the WM-RP, the following result is straightforward.
	
	\begin{corollary}
		Among all IR, IC and EF-RP auctions, the WM-RP maximizes the platform's revenue.
	\end{corollary}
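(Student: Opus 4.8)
The plan is to combine the characterization of IC and IR auctions (Theorems~\ref{ic_cha} and~\ref{ir_cha}) with the rigidity that EF-RP imposes on the allocation, and to compare revenues report-by-report. First I would fix an arbitrary mechanism $\mathcal{M}'=(\pi',x')$ that is simultaneously IR, IC and EF-RP, and argue that its revenue cannot exceed that of the WM-RP at any report $({\bf c}',{\bf p})$. Since $\mathcal{M}'$ is EF-RP, Definition~\ref{efficient_allocation_p} forces $\pi'$ to maximize $\sum_{i\in N}v_i(c_i',p_i)\pi_i'({\bf c}',{\bf p})$ at every report; with a single slot this means the slot is awarded to the advertiser of highest reported value, exactly as in the WM-RP. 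Hence $\pi'$ and the WM-RP allocation agree except possibly in how ties are broken.

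Next I would pin down the payment and isolate the only remaining degree of freedom. Because $\mathcal{M}'$ is IC, Theorem~\ref{ic_cha} guarantees that $x_i'(c_i',{\bf c}'_{-i},{\bf p})$ has the form~(\ref{payment}) for some term $U_i'({\bf c}'_{-i},{\bf p})$, and because $\mathcal{M}'$ is IR, Theorem~\ref{ir_cha} gives $U_i'({\bf c}'_{-i},{\bf p})\ge 0$. Summing~(\ref{payment}) over all $i$, the revenue $R({\bf c}',{\bf p},\mathcal{M}')$ splits into an allocation-determined part, namely $\sum_{i}\big(v_i(c_i',p_i)\pi_i'-\lambda_i(p_i)\int_{c_i'}^{\overline{c}_i}\pi_i'(z,{\bf c}'_{-i},{\bf p})dz\big)$, minus $\sum_{i}U_i'({\bf c}'_{-i},{\bf p})$. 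The WM-RP is precisely the member of this family obtained by taking $U_i\equiv 0$, so that $R({\bf c}',{\bf p},\mathcal{M}')=R({\bf c}',{\bf p},\text{WM-RP})-\sum_{i}U_i'({\bf c}'_{-i},{\bf p})\le R({\bf c}',{\bf p},\text{WM-RP})$, which is exactly the claim.

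The step that needs care, and which I expect to be the only real obstacle, is verifying that the allocation-determined part is genuinely common to $\mathcal{M}'$ and the WM-RP despite possibly different tie-breaking. For the integral term this is immediate: two welfare-maximizing allocations can disagree only on the measure-zero set of $z$ at which a tie in value occurs, so $\int_{c_i'}^{\overline{c}_i}\pi_i'(z,{\bf c}'_{-i},{\bf p})dz$ is unaffected and equals $v_i^{-1}(v^{(2)}({\bf c}',{\bf p}),p_i)-c_i'$ for the winner. For the pointwise term $v_i(c_i',p_i)\pi_i'$ at a tie the contending values all equal the threshold $v^{(2)}({\bf c}',{\bf p})$, so whichever advertiser is declared the winner contributes the same amount; thus the allocation-determined part collapses to the WM-RP payment (the second highest reported value for the winner, zero for the losers) in every case. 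Once this invariance is secured, the inequality follows directly from $U_i'\ge 0$, so the argument reduces to careful tie-breaking bookkeeping rather than any deeper estimate.
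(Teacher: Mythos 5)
Your proposal is correct and follows essentially the same route as the paper: the paper justifies this corollary in one line by noting that, given the (EF-RP-pinned) monotone allocation, Theorems~\ref{ic_cha} and~\ref{ir_cha} leave $U_i({\bf c}'_{-i},{\bf p})\ge 0$ as the only degree of freedom, and the WM-RP sets $U_i\equiv 0$. Your additional tie-breaking bookkeeping (the integral term and the pointwise term being invariant across welfare-maximizing allocations) is a careful elaboration of a point the paper treats as immediate, not a different approach.
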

	
	Besides the allocation efficiency, another desiderata of the platform is revenue. We next investigate how to design auction mechanisms that maximize the platform's revenue.
	
	\subsection{Revenue-Maximizing Auctions with Reported Prices}
	The following lemma gives a succinct description of advertiser's expected payment, which plays a key role in characterizing revenue-maximizing auctions.
	\begin{lemma}\label{expected_p}
		Given an IC and IR mechanism $\mathcal{M}$, reported prices ${\bf p}$ and a type profile of others ${\bf c}_{-i}$, the expected payment $\mathrm{E}_{c_i\sim \mathcal{C}_i}[x_i({\bf c}, {\bf p})]$ of advertiser $i$ is equal to:
		\begin{small}
			\begin{equation}
				\mathrm{E}_{c_i\sim \mathcal{C}_i}[\pi_i({\bf c}, {\bf p})\phi_i(c_i, p_i)],
			\end{equation}
		\end{small}
		where $\phi_i(c_i, p_i)=v_i(c_i, p_i)-\lambda_i(p_i)\frac{\mathcal{C}_i(c_i)}{\mathcal{C}'_i(c_i)}$ is called the virtual value of advertiser $i$.
	\end{lemma}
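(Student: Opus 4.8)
The plan is to start from the closed-form payment expression supplied by Lemma \ref{ic_subg}, take its expectation over $c_i\sim\mathcal{C}_i$, and convert the resulting double integral into the virtual-value form by interchanging the order of integration. Since the mechanism is IC and we adopt the convention $U_i({\bf c}_{-i}, {\bf p})=0$, Lemma \ref{ic_subg} gives, under truthful reporting,
\begin{small}
	\begin{equation*}
		x_i({\bf c}, {\bf p})=v_i(c_i, p_i)\pi_i({\bf c}, {\bf p})-\lambda_i(p_i)\int_{c_i}^{\overline{c}_i}\pi_i(z, {\bf c}_{-i}, {\bf p})\,dz.
	\end{equation*}
\end{small}
Taking the expectation over $c_i$ with density $\mathcal{C}'_i$ splits $\mathrm{E}_{c_i}[x_i({\bf c}, {\bf p})]$ into the term $\mathrm{E}_{c_i}[v_i(c_i, p_i)\pi_i({\bf c}, {\bf p})]$ plus the expectation of the integral term, which I would write explicitly as
\begin{small}
	\begin{equation*}
		\lambda_i(p_i)\int_{\underline{c}_i}^{\overline{c}_i}\mathcal{C}'_i(c_i)\int_{c_i}^{\overline{c}_i}\pi_i(z, {\bf c}_{-i}, {\bf p})\,dz\,dc_i.
	\end{equation*}
\end{small}

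The key step is to apply Fubini's theorem to the double integral over the triangular region $\underline{c}_i\le c_i\le z\le\overline{c}_i$. Swapping the order of integration and using $\int_{\underline{c}_i}^{z}\mathcal{C}'_i(c_i)\,dc_i=\mathcal{C}_i(z)$ (with $\mathcal{C}_i(\underline{c}_i)=0$), the integral term becomes $\lambda_i(p_i)\int_{\underline{c}_i}^{\overline{c}_i}\pi_i(z, {\bf c}_{-i}, {\bf p})\mathcal{C}_i(z)\,dz$. Multiplying and dividing the integrand by the density $\mathcal{C}'_i(z)$ re-expresses this as the expectation $\mathrm{E}_{c_i}[\lambda_i(p_i)\pi_i({\bf c}, {\bf p})\frac{\mathcal{C}_i(c_i)}{\mathcal{C}'_i(c_i)}]$, so that the subtracted quantity is now written against the same measure as the first term.

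Combining the two terms under a single expectation then yields $\mathrm{E}_{c_i}[\pi_i({\bf c}, {\bf p})(v_i(c_i, p_i)-\lambda_i(p_i)\frac{\mathcal{C}_i(c_i)}{\mathcal{C}'_i(c_i)})]$, which equals $\mathrm{E}_{c_i}[\pi_i({\bf c}, {\bf p})\phi_i(c_i, p_i)]$ by the definition of $\phi_i$, completing the argument. I expect the main obstacle to be the interchange of integration order: one must confirm the integrand is integrable (which holds since $\pi_i\in\{0,1\}$ and $\lambda_i,\mathcal{C}'_i$ are bounded) and handle the lower boundary contribution $\mathcal{C}_i(\underline{c}_i)=0$ carefully so that no spurious term survives the swap. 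Everything else is a routine rewriting of integrals as expectations.
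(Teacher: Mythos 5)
Your proposal is correct and follows essentially the same route as the paper's own proof: start from the payment formula of Lemma \ref{ic_subg} with $U_i({\bf c}_{-i},{\bf p})=0$, take the expectation over $c_i$, swap the order of integration in the double integral to produce $\mathcal{C}_i(z)$, and fold the result back into the expectation as the virtual value $\phi_i$. Your explicit attention to the Fubini justification and the vanishing boundary term $\mathcal{C}_i(\underline{c}_i)=0$ is a slight refinement of the paper's argument, but the approach is identical.
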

\begin{proof}
	Given an IC and IR mechanism $\mathcal{M}$, reported prices ${\bf p}$ and ${\bf c}_{-i}$, according to the proof of Theorem \ref{ic_cha} advertiser $i$'s expected payment $\mathrm{E}_{c_i\sim \mathcal{C}_i}[x_i({\bf c}, {\bf p})]$ is equal to 
	\begin{small}
		\begin{align*}
			&\int_{\underline{c}_i}^{\overline{c}_i}[v_i(y, p_i)\pi_i(y, {\bf c}_{-i}, {\bf p})]\mathcal{C}'_i(y)dy\\
			&-\int_{\underline{c}_i}^{\overline{c}_i}\mathcal{C}'_i(y)\int_{y}^{\overline{c}_i}\lambda_i(p_i) \pi_i(z, {\bf c}_{-i},{\bf p})dzdy.
		\end{align*}
	\end{small}
	Since $p_i$ is independent of advertiser $i$'s type $c_i$, we can change the order of integration and get that
	\begin{small}
		\begin{align*}
			&\int_{\underline{c}_i}^{\overline{c}_i}\mathcal{C}'_i(y)\int_{y}^{\overline{c}_i}\lambda_i(p_i) \pi_i(z, {\bf c}_{-i}, {\bf p})dzdy\\
			&=\int_{\underline{c}_i}^{\overline{c}_i}\lambda_i(p_i) \pi_i(z, {\bf c}_{-i}, {\bf p})\int_{0}^{z}\mathcal{C}'_i(y)dydz\\
			&=\int_{\underline{c}_i}^{\overline{c}_i}\lambda_i(p_i) \pi_i(z, {\bf c}_{-i}, {\bf p})\mathcal{C}_i(z)dz.
		\end{align*}
	\end{small}
	Therefore, $\mathrm{E}_{c_i\sim \mathcal{C}_i}[x_i({\bf c}, {\bf p})]$ can be formulated as 
	\begin{small}
		\begin{align*}
			&\int_{\underline{c}_i}^{\overline{c}_i}[v_i(y, p_i)\pi_i(y,{\bf c}_{-i}, {\bf p})\\
			&-\lambda_i(p_i)\pi_i(y, {\bf c}_{-i}, {\bf p})\frac{\mathcal{C}_i(y)}{\mathcal{C}'_i(y)}]\mathcal{C}'_i(y)dy\\
			&=\mathrm{E}_{c_i\sim \mathcal{C}_i}[\pi_i({\bf c}, {\bf p})\phi_i(c_i, p_i)],
		\end{align*}
	\end{small}
	where $\phi_i(y, p_i)=v_i(y, p_i)-\lambda_i(p_i)\frac{\mathcal{C}_i(y)}{\mathcal{C}'_i(y)}$.
\end{proof}

	Based on Lemma \ref{expected_p}, we can characterize the seller's expected revenue for any given display price profile.
	\begin{theorem}\label{seller_rev}
		Given reported prices ${\bf p}$ and an IC and IR mechanism $\mathcal{M}$, the expected revenue $\mathrm{E}_{{\bf c}\sim \mathcal{C}}[\sum_{i=1}^{N}x_i({\bf c}, {\bf p})]$ of the platform is equal to the the expected virtual social welfare
		\begin{small}
			\begin{equation}
				\mathrm{E}_{{\bf c}\sim \mathcal{C}}[\sum_{i=1}^{N}\pi_i( {\bf c}, {\bf p})\phi_i(c_i, p_i)].
		\end{equation}\end{small}
	\end{theorem}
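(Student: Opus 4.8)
The plan is to derive this from Lemma \ref{expected_p} via linearity of expectation and an iterated-expectation (tower) argument, exploiting that the costs are drawn independently across advertisers. First I would write the expected revenue as
\begin{small}
\begin{equation*}
\mathrm{E}_{{\bf c}\sim \mathcal{C}}\Big[\sum_{i=1}^{N}x_i({\bf c}, {\bf p})\Big]
=\sum_{i=1}^{N}\mathrm{E}_{{\bf c}\sim \mathcal{C}}\big[x_i({\bf c}, {\bf p})\big],
\end{equation*}
\end{small}
which is just linearity of expectation. The task then reduces to evaluating each term $\mathrm{E}_{{\bf c}\sim \mathcal{C}}[x_i({\bf c}, {\bf p})]$ separately.

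Next I would factor the full-profile expectation as an iterated expectation, taking the inner expectation over $c_i$ and the outer over the others' types ${\bf c}_{-i}$:
\begin{small}
\begin{equation*}
\mathrm{E}_{{\bf c}\sim \mathcal{C}}\big[x_i({\bf c}, {\bf p})\big]
=\mathrm{E}_{{\bf c}_{-i}}\Big[\mathrm{E}_{c_i\sim \mathcal{C}_i}\big[x_i(c_i,{\bf c}_{-i}, {\bf p})\big]\Big].
\end{equation*}
\end{small}
This step requires that each $c_i$ be independent of ${\bf c}_{-i}$, which is guaranteed by the modeling assumption that every $c_i$ is drawn from its own distribution $\mathcal{C}_i$; I would state this explicitly. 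For each fixed realization of ${\bf c}_{-i}$, Lemma \ref{expected_p} applies verbatim and rewrites the inner expectation as $\mathrm{E}_{c_i\sim \mathcal{C}_i}[\pi_i({\bf c}, {\bf p})\phi_i(c_i, p_i)]$, where $\phi_i(c_i,p_i)=v_i(c_i,p_i)-\lambda_i(p_i)\frac{\mathcal{C}_i(c_i)}{\mathcal{C}'_i(c_i)}$ is the virtual value.

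Substituting back and re-collapsing the iterated expectation into a single expectation over the joint profile gives
\begin{small}
\begin{equation*}
\mathrm{E}_{{\bf c}\sim \mathcal{C}}\big[x_i({\bf c}, {\bf p})\big]
=\mathrm{E}_{{\bf c}\sim \mathcal{C}}\big[\pi_i({\bf c}, {\bf p})\phi_i(c_i, p_i)\big],
\end{equation*}
\end{small}
and summing over $i$ and pulling the finite sum inside the expectation yields exactly the claimed expected virtual social welfare $\mathrm{E}_{{\bf c}\sim \mathcal{C}}[\sum_{i=1}^{N}\pi_i({\bf c}, {\bf p})\phi_i(c_i, p_i)]$. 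I do not expect a genuine obstacle here, since Lemma \ref{expected_p} already contains all the analytic content (the integration-by-parts/order-swap that produces the virtual value). The only point demanding care is the bookkeeping of the independence assumption: I must confirm that ${\bf p}$ and ${\bf c}_{-i}$ may be held fixed while integrating over $c_i$, so that Lemma \ref{expected_p} is invoked on a legitimate conditional distribution. Everything else is routine linearity and Fubini-type reindexing.
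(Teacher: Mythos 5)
Your proposal is correct and is essentially the paper's own proof: the paper likewise takes the expectation over ${\bf c}_{-i}$ of both sides of Lemma \ref{expected_p} (your tower-property step, with independence of the $c_i$'s used implicitly) and then applies linearity of expectation to sum over $i$. The only difference is cosmetic ordering — you apply linearity first and the tower argument second, while the paper does the reverse — so there is nothing substantive to add.
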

	\begin{proof}
		Take the expectation, with respect to ${\bf c}_{-i}$, of both sides of \begin{small}
			\begin{equation*}
				\mathrm{E}_{c_i\sim \mathcal{C}_i}[x_i({\bf c}, {\bf p})]=\mathrm{E}_{c_i\sim \mathcal{C}_i}[\pi_i({\bf c}, {\bf p})\phi_i(c_i, p_i)],
			\end{equation*}
		\end{small} we obtain that
		\begin{small}
			\begin{equation*}
				\mathrm{E}_{{\bf c}\sim \mathcal{C}}[x_i({\bf c}, {\bf p})]=\mathrm{E}_{{\bf c}\sim \mathcal{C}}[\pi_i( {\bf c}, {\bf p})\phi_i(c_i, p_i)].
			\end{equation*}
		\end{small}
		Apply linearity of expectations, we have that
		\begin{align*}
			&\mathrm{E}_{{\bf c}\sim \mathcal{C}}[\sum_{i=1}^{N}x_i({\bf c}, {\bf p})]=\sum_{i=1}^{N}\mathrm{E}_{{\bf c}\sim \mathcal{C}}[x_i({\bf c}, {\bf p})]\\
			&=\sum_{i=1}^{N}\mathrm{E}_{{\bf c}\sim \mathcal{C}}[\pi_i( {\bf c}, {\bf p})\phi_i(c_i, p_i)]\\
			&=\mathrm{E}_{{\bf c}\sim \mathcal{C}}[\sum_{i=1}^{N}\pi_i( {\bf c}, {\bf p})\phi_i(c_i, p_i)].
		\end{align*}
	\end{proof}
	Theorem \ref{seller_rev} indicates that the platform's expected revenue is identical to the expectation of the virtual social welfare. To maximize the platform's expected revenue, we can maximize the virtual social welfare pointwisely for any report $({\bf c}, {\bf p})$. Based on this observation, we now propose the virtual welfare maximizer with reported prices (short for VWM-RP).
	\begin{framed}
		\noindent\textbf{Virtual Welfare Maximizer with Reported Prices (VWM-RP)}\\
		\rule{\textwidth}{0.5pt}
		\begin{itemize}
			\item {\bf Allocation Policy:} Given reports $({\bf c}', {\bf p})$, allocate the ad slot to maximize $\sum_{i=1}^{N}\phi_i(c_i', p_i)\pi_i({\bf c}', {\bf p})$, break tie arbitrarily.
			\item \textbf{Payment policy:} For all advertiser $i\in N$, her payment $x_i({\bf c}', {\bf p})$ is defined below:
			\begin{itemize}
				\item  if $\pi_i({\bf c}', {\bf p})=0$, then $x_i({\bf c}', {\bf p})=0$;
				\item if $\pi_i({\bf c}', {\bf p})=1$, then $x_i({\bf c}', {\bf p})$ is defined as $$v_i(\phi_i^{-1}(\max\{\phi^{(2)}({\bf c}', {\bf p}),0\},p_i), p_i), $$
			\end{itemize}
			where $\phi_i^{-1}$ is the inverse function of $\phi_i$ w.r.t. $c_i$ and $\phi^{(2)}({\bf c}', {\bf p})$ denotes the second highest virtual value.
		\end{itemize}
	\end{framed}
	Different from the WM-RP, in the VWM-RP the allocation policy maximizes the virtual social welfare. Note that if all advertisers' virtual values are negative, the platform will not allocate the slot according to the allocation policy, i.e., the VWM-RP is not EF-RP. To implement the allocation policy of the VWM-RP, we need a regular condition on the distributions, which is defined below.
	\begin{defn}
		A distribution $\mathcal{C}_i$ is regular if the virtual value function $\phi_i(c_i, p_i)$ is non-increasing in $c_i$ for all $p_i$.
	\end{defn}
	Recall that $\phi_i(c_i, p_i)$ can be reformulated as $v_i(c_i+\frac{1}{\sigma_i(c_i)}, p_i)$, where
	$\sigma(c_i)=\mathcal{C}'_i(c_i)/\mathcal{C}_i(c_i)$ is the reverse hazard rate of $\mathcal{C}_i$. Since $v_i(\cdot,\cdot)$ is non-increasing with the first variable, a sufficient condition for regularity is that $\sigma(\cdot)$ is non-increasing. We next show if the distributions are regular, the VWM-RP will maximize the seller's revenue.
	\begin{theorem}
		Given a set of regular distributions $\mathcal{C}_1, \cdots, \mathcal{C}_n$, the VWM-RP is IC and IR, and maximizes the platform's revenue for any given set of display prices.
	\end{theorem}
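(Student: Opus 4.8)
The plan is to verify the three claimed properties by leaning on the characterizations established earlier. First I would establish IC by checking the two conditions of Theorem~\ref{ic_cha}. For the monotonicity condition, I would invoke regularity: since each $\phi_i(c_i, p_i)$ is non-increasing in $c_i$, raising $c_i'$ can only decrease advertiser $i$'s virtual value while leaving the others unchanged, so $i$ can only move from winning to losing and never the reverse; hence $\pi_i$ is non-increasing in $c_i'$. For the payment condition, I would mimic the computation in the proof of Proposition~\ref{vm_result}: setting $U_i({\bf c}'_{-i}, {\bf p}) = 0$, the winner $i$ keeps the slot precisely when $\phi_i(c_i', p_i) \ge \max\{\phi^{(2)}({\bf c}', {\bf p}), 0\}$, which by the invertibility of $\phi_i$ (guaranteed by regularity) is equivalent to $c_i' \le \phi_i^{-1}(\max\{\phi^{(2)}({\bf c}', {\bf p}), 0\}, p_i)$. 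Substituting this threshold into the payment formula~(\ref{payment}) and evaluating the integral collapses it to $v_i(\phi_i^{-1}(\max\{\phi^{(2)}({\bf c}', {\bf p}), 0\}, p_i), p_i)$, matching the stated payment rule. IR then follows immediately from Theorem~\ref{ir_cha}, since the VWM-RP sets $U_i({\bf c}'_{-i}, {\bf p}) = 0 \ge 0$.

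For the revenue claim, I would argue optimality among all IC and IR mechanisms for a fixed price profile ${\bf p}$. By Theorem~\ref{seller_rev}, the expected revenue of any such mechanism equals the expected virtual social welfare $\mathrm{E}_{{\bf c} \sim \mathcal{C}}[\sum_i \pi_i({\bf c}, {\bf p}) \phi_i(c_i, p_i)]$. This expectation is bounded above by the expectation of the pointwise maximum, obtained by allocating, for each realization $({\bf c}, {\bf p})$, to whichever advertiser has the largest positive virtual value and withholding the slot when all virtual values are negative --- which is exactly the VWM-RP allocation. Hence the VWM-RP attains this upper bound, so no IC and IR mechanism can achieve strictly greater expected revenue.

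The step requiring care is reconciling pointwise maximization with the feasibility constraint imposed by IC. The upper bound above is an unconstrained optimum over allocation functions, and a priori the allocation maximizing virtual welfare pointwise need not be non-increasing in $c_i$, in which case it would violate the first condition of Theorem~\ref{ic_cha} and correspond to no IC mechanism, breaking the argument. This is precisely where regularity does the work: it makes the pointwise-optimal allocation monotone, so the VWM-RP is simultaneously IC and a pointwise maximizer of virtual welfare, and the upper bound is genuinely achieved. I expect this interplay, rather than any single calculation, to be the main subtlety.
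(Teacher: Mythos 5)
Your proposal is correct and follows essentially the same route as the paper's proof: regularity gives monotonicity of the allocation, the payment is verified against the characterization formula~(\ref{payment}) as in Proposition~\ref{vm_result} (yielding IC and IR via Theorems~\ref{ic_cha} and~\ref{ir_cha}), and revenue optimality follows from Theorem~\ref{seller_rev} plus pointwise maximization of virtual welfare. Your closing remark about why regularity is needed to make the pointwise maximizer implementable is exactly the point the paper leaves implicit.
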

	\begin{proof}
		If $\mathcal{C}_1, \cdots, \mathcal{C}_n$ are regular, then the allocation policy of the VWM-RP is non-increasing in $c_i$. Following the proof of Proposition \ref{vm_result}, we can verify that the payment policy of the VWM-RP is consistent with (\ref{payment}), and therefore the VWM-RP is IC and IR according to Theorem \ref{ic_cha}. Since the VWM-RP maximizes the virtual welfare $\sum_{i=1}^{N}\pi_i({\bf c}', {\bf p})\phi_i(c_i', p_i)$ pointwisely for each report $({\bf c}', {\bf p})$, then based on Theorem \ref{seller_rev}, we know the VWM-RP maximizes the platform's revenue.
	\end{proof}
	If $\mathcal{C}_i$ is not regular, we can use the ``ironing technique'' to obtain a surrogate $\tilde{\mathcal{C}}_i$ \cite{myerson1981optimal}, which is regular and replaces $\mathcal{C}_i$ in the allocation policy.
	
	\section{Auction Design with Strategic Display Prices}
	In this section, we consider the general scenario where advertisers can report their display prices strategically. For this scenario, advertiser $i$ will choose a display price that maximizes her expected utility for any auction mechanism. 
	Since product costs are private information for all advertisers, Bayesian Nash Equilibrium (BNE) is a suitable solution concept for this setting, which is formally defined below.
	\begin{defn}\label{bne_price}
		Given an IC and IR auction $\mathcal{M}$, a strategy profile ${\bf p}^{\mathcal{M}}=(p_1^{\mathcal{M}},\cdots, p_n^{\mathcal{M}})$ is a Bayesian Nash Equilibrium if for all $i$, all $p_i'$ and all $c_i$,
		\begin{small}
			\begin{align*}
				&\mathrm{E}_{{\bf c}_{-i}\sim \mathcal{C}_{-i}}[u_i(c_i, {\bf c}, (p_i^{\mathcal{M}}(c_i), {\bf p}^{\mathcal{M}}_{-i}({\bf c}_{-i}))]\\
				&\ge \mathrm{E}_{{\bf c}_{-i}\sim \mathcal{C}_{-i}}[u_i(c_i, {\bf c}, (p_i', {\bf p}^{\mathcal{M}}_{-i}({\bf c}_{-i})))],
		\end{align*}\end{small}
		where ${\bf p}^{\mathcal{M}}_{-i}({\bf c}_{-i})=\{p_j^{\mathcal{M}}(c_j)\}_{j\in N\setminus\{i\}}$.
	\end{defn}
	In other words, a strategy profile ${\bf p}^{\mathcal{M}}$ is a BNE if no one can gain more utilities by unilaterally deviating from $p_i^{\mathcal{M}}(c_i)$. 
	Finding BNE in games is known to be a hard problem both analytically and computationally \cite{Victor2007} and previous works derived the BNE analytically only for the simplest auction settings \cite{krishna2009auction}. Recall that the price information enters into the allocation policy according to Theorem \ref{ic_cha}, hence it is impossible to obtain a closed form of the equilibrium prices for all auctions. 
	For tractability, in the following contents we study two special classes of allocation policies, namely the price-independent allocation policy and the affine maximizer allocation policy, in which the equilibrium prices are given analytically.
	
	\subsection{Price-Independent Allocation Policy}
	We first study price-independent allocation policy, where the slot is allocated without considering the reported display prices. It can apply to the circumstances where the display prices are unavailable before the auction or the advertisers tend to adjust their display prices dynamically after the auction.
	The formal definition of price-independent allocation policy is given below.
	
	\begin{defn}[Price-Independent Allocation Policy]
		We say an allocation policy $\pi$ is price-independent (PI) if for all $i\in N$, and any two reports $({\bf c}',{\bf p}^1)$ and $({\bf c}',{\bf p}^2)$, $$\pi_i({\bf c}',{\bf p}^1)=\pi_i({\bf c}',{\bf p}^2).$$
	\end{defn}
	Based on Theorem \ref{ic_cha}, we can easily derive the equilibrium prices for PI allocation policies.
	\begin{prop}\label{pi_all}
		Given any IC and IR mechanism $\mathcal{M}$ with a PI allocation policy, the price $$\overline{p}_i^{\mathcal{M}}(c_i)=\arg\max_{p_i'}\{\lambda_i(p_i')\}$$ forms the (dominant-strategy) equilibrium price for all $i$.
	\end{prop}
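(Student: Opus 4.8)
The plan is to reduce advertiser $i$'s price-setting problem to the unconstrained maximization of her conversion rate $\lambda_i$, by combining the equilibrium utility formula forced by incentive compatibility with the fact that a PI allocation cannot react to the reported prices.

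First I would fix an arbitrary advertiser $i$, an arbitrary true cost $c_i$, and arbitrary reports $({\bf c}'_{-i}, {\bf p}_{-i})$ of the other advertisers. Because $\mathcal{M}$ is IC, for each fixed own price report $p_i$ truthful cost reporting is optimal, so by Lemma \ref{ic_subg} (using the standing convention $U_i \equiv 0$) the largest utility $i$ can secure while displaying price $p_i$ is
\[
  \max_{c_i'} u_i\big(c_i, (c_i', {\bf c}'_{-i}), (p_i, {\bf p}_{-i})\big)
  = \lambda_i(p_i)\int_{c_i}^{\overline{c}_i}\pi_i(z, {\bf c}'_{-i}, (p_i, {\bf p}_{-i}))\,dz .
\]
This identity is the heart of the argument: the price $p_i$ enters the attainable utility only through $\lambda_i(p_i)$ and through the allocation integral.

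Next I would invoke price-independence. Since $\pi_i(z, {\bf c}'_{-i}, (p_i, {\bf p}_{-i}))$ does not depend on the price profile, the integral collapses to a constant $K_i := \int_{c_i}^{\overline{c}_i}\pi_i(z, {\bf c}'_{-i})\,dz$ that is independent of both $p_i$ and ${\bf p}_{-i}$, and $K_i \ge 0$ because $\pi_i$ takes values in $\{0,1\}$. Hence the attainable utility factorizes as $\lambda_i(p_i)\,K_i$, a non-negative multiple of $\lambda_i(p_i)$, which is maximized exactly at $p_i = \arg\max_{p_i'}\lambda_i(p_i') = \overline{p}_i^{\mathcal{M}}(c_i)$.

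Finally I would note that this maximizer is one and the same for every $c_i$, every ${\bf c}'_{-i}$, and every ${\bf p}_{-i}$; consequently the conclusion is not merely a Bayesian Nash equilibrium in the sense of Definition \ref{bne_price} but a dominant-strategy equilibrium, and in particular $\overline{p}_i^{\mathcal{M}}$ does not actually vary with $c_i$ despite the notation. The only step requiring genuine care is ruling out a \emph{joint} deviation in which $i$ distorts her cost report in tandem with a different price, but the displayed identity has already maximized over $c_i'$ for each fixed $p_i$, so that case is absorbed; beyond this, the sole modeling caveat is that $\lambda_i$ must attain its supremum, which is implicit in the proposition's use of $\arg\max$. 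I therefore expect no substantive obstacle once the IC utility formula and the PI property are in hand.
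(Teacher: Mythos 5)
Your proposal is correct and follows essentially the same route as the paper's own proof: invoke the IC utility formula from Lemma \ref{ic_subg} with $U_i \equiv 0$, observe that price-independence makes the allocation integral a nonnegative constant in the price profile, and conclude that utility factorizes as $\lambda_i(p_i)$ times that constant, so maximizing $\lambda_i$ is optimal regardless of others' reports. Your explicit handling of joint cost-and-price deviations (noting that IC already maximizes over $c_i'$ for each fixed $p_i$) is a point the paper leaves implicit, and is a welcome tightening rather than a different argument.
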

	\begin{proof}
		Given any IC and IR mechanism $\mathcal{M}$, according to Theorem \ref{ic_cha} advertiser $i$'s utility can be simplified as 
		$$u_i(c_i, (c_i, {\bf c}_{-i}), {\bf p})=\lambda_i(p_i)\int_{c_i}^{\overline{c}_i}\pi_i(z, {\bf c}_{-i}, {\bf p})dz.$$
		
		If the allocation policy is PI, then $\int_{c_i}^{\overline{c}_i}\pi_i(z, {\bf c}_{-i}, {\bf p})dz$ is independent of $p_i$. Therefore, the expected utility of advertiser $i$ is identical to \begin{align*}
			&\mathrm{E}_{{\bf c}_{-i}\sim \mathcal{C}_{-i}}[u_i(c_i, {\bf c}, {\bf p},\mathcal{M})]\\
			&=\mathrm{E}_{{\bf c}_{-i}\sim \mathcal{C}_{-i}}[\lambda_i(p_i)\int_{c_i}^{\overline{c}_i}\pi_i(z, {\bf c}_{-i}, {\bf p})dz]\\
			&=\lambda_i(p_i)\mathrm{E}_{{\bf c}_{-i}\sim \mathcal{C}_{-i}}[\int_{c_i}^{\overline{c}_i}\pi_i(z, {\bf c}_{-i})dz].
		\end{align*}
		To maximize the expected utility, $i$ can choose the price $p_i$ that maximize $\lambda_i(p_i)$, no matter what the others report.
	\end{proof}
	Proposition \ref{pi_all} indicates that the equilibrium price $\overline{p}_i^{\mathcal{M}}(c_i)$ is a dominate strategy and is independent of the cost reports of advertisers. If the platform considers all PI allocation policies, then the following auction maximizes the revenue.
	
	\begin{framed}
		\noindent\textbf{Virtual Welfare Maximizer with Price-Independent Allocations (VWM-PIA)}\\
		\rule{\textwidth}{0.5pt}
		Given a set of convention rate functions $\{\lambda_i\}_{i\in N}$, compute the equilibrium price profile $\overline{{\bf p}}=\{\overline{p}_i\}_{i\in N}$, where $\overline{p}_i=\arg\max_{p_i'}\lambda_i(p_i')$.
		\begin{itemize}
			\item {\bf Allocation Policy:} Given reports $({\bf c}', {\bf p})$, allocate the ad slot to maximize $\sum_{i=1}^{N}\phi_i(c_i', \overline{p}_i)\pi_i({\bf c}', {\bf p})$, break tie arbitrarily.
			\item \textbf{Payment policy:} For all advertiser $i\in N$, her payment $x_i({\bf c}', {{\bf p}})$ is defined below:
			\begin{itemize}
				\item  if $\pi_i({\bf c}', {\bf p})=0$, then $x_i({\bf c}', {\bf p})=0$;
				\item if $\pi_i({\bf c}', {\bf p})=1$, then $x_i({\bf c}', {\bf p})$ is defined as $$v_i(\phi_i^{-1}(\max\{\phi^{(2)}({\bf c}', {{\bf p}}),0\},{p}_i), {p}_i).$$
			\end{itemize}
		\end{itemize}
	\end{framed}
	
	\begin{theorem}
		Given a set of regular distributions $\mathcal{C}_1, \cdots, \mathcal{C}_n$, the VWM-PIA is IC and IR, and maximizes the platform's revenue over all PI allocation policies.
	\end{theorem}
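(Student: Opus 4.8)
The plan is to split the statement into two parts that can be argued separately: first that the VWM-PIA is IC and IR, which reduces to checking the two conditions of Theorem~\ref{ic_cha} together with the nonnegativity condition of Theorem~\ref{ir_cha}; and second that it is revenue-optimal over the PI class, which will combine the equilibrium-price characterization of Proposition~\ref{pi_all} with the revenue identity of Theorem~\ref{seller_rev}. The guiding observation is that the price dimension and the cost dimension decouple: Proposition~\ref{pi_all} fixes the equilibrium price, while Theorem~\ref{ic_cha} governs cost-truthfulness for every reported price.

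For the IC and IR part I would follow the template of the proof of Proposition~\ref{vm_result}. The crucial first point is that the VWM-PIA allocation is price-independent: the precomputed profile $\overline{{\bf p}}$ depends only on the fixed functions $\{\lambda_i\}_{i\in N}$, so $\pi_i({\bf c}', {\bf p})$ is a function of ${\bf c}'$ alone, entering through the quantities $\phi_i(c_i', \overline{p}_i)$. Under regularity of each $\mathcal{C}_i$, the map $c_i' \mapsto \phi_i(c_i', \overline{p}_i)$ is non-increasing; since the slot is awarded to the largest nonnegative virtual value and the competitors' values do not depend on $c_i'$, advertiser $i$'s winning set is a lower interval $[\underline{c}_i, c_i^\ast]$ in $c_i'$, which is exactly the monotonicity demanded by Condition~1. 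For Condition~2 I would insert the threshold $c_i^\ast = \phi_i^{-1}(\max\{\phi^{(2)}, 0\}, \overline{p}_i)$ into the payment formula (\ref{payment}) with ${U}_i({\bf c}'_{-i}, {\bf p}) = 0$: using $v_i(c_i, p_i) = (p_i - c_i)\lambda_i(p_i)$ and the fact that $\pi_i(z, \cdot) = 1$ precisely for $z \in [c_i', c_i^\ast]$, the integral $\int_{c_i'}^{\overline{c}_i}\pi_i(z,\cdot)\,dz$ equals $c_i^\ast - c_i'$ and the payment collapses to $v_i(c_i^\ast, p_i)$, matching the stated payment rule. Because $U_i = 0 \ge 0$, IR follows immediately from Theorem~\ref{ir_cha}.

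For the revenue claim I would first invoke Proposition~\ref{pi_all}: since every mechanism in the comparison class has a PI allocation, each advertiser's dominant-strategy equilibrium price is exactly $\overline{p}_i = \arg\max_{p_i'}\lambda_i(p_i')$. Hence all such mechanisms induce the same equilibrium profile $\overline{{\bf p}}$, and by Theorem~\ref{seller_rev} the platform's expected revenue equals the expected virtual welfare $\mathrm{E}_{{\bf c}\sim\mathcal{C}}[\sum_{i} \pi_i({\bf c}, \overline{{\bf p}})\phi_i(c_i, \overline{p}_i)]$, a quantity that now depends only on the allocation $\pi$ (the choice $U_i = 0$ being revenue-optimal for a fixed allocation, as in the corollary following Proposition~\ref{vm_result}). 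The VWM-PIA allocation maximizes $\sum_i \phi_i(c_i', \overline{p}_i)\pi_i$ pointwise for every realization, withholding the slot whenever all virtual values are negative, so its integrand weakly dominates that of any other PI allocation at every ${\bf c}$; taking expectations yields revenue-optimality over the whole class.

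The step I expect to require the most care is the consistency of the price arguments inside the allocation and the payment. One must verify that the threshold, computed at the precomputed price $\overline{p}_i$, combines with the payment, evaluated at the reported $p_i$, so that Condition~2 of Theorem~\ref{ic_cha} holds for \emph{every} reported price rather than only at the equilibrium price; this is what secures cost-truthfulness as a dominant strategy independently of the price dimension. Establishing that the two characterizations---cost-IC via Theorem~\ref{ic_cha} and price-equilibrium via Proposition~\ref{pi_all}---decouple cleanly, with no circular dependence between the fixed $\overline{{\bf p}}$ and the strategic ${\bf p}$, is the main conceptual obstacle; once that separation is made explicit, both halves of the proof reduce to prior results.
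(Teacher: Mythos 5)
Your proposal is correct and follows essentially the same route as the paper's own proof: establish IC and IR by verifying the two conditions of Theorem~\ref{ic_cha} (monotonicity under regularity, and the payment collapsing to the threshold form of (\ref{payment}) with $U_i=0$), then combine Proposition~\ref{pi_all} with Theorem~\ref{seller_rev} to equate equilibrium revenue with expected virtual welfare and conclude by pointwise maximization. The only difference is that you spell out details the paper compresses into ``one can check'' --- in particular the point that the winning threshold must be computed at the precomputed $\overline{p}_i$ while $v_i$ and $\lambda_i$ in the payment are evaluated at the reported $p_i$, which is indeed the step that makes cost-truthfulness hold for every reported price.
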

	\begin{proof}
		According to Proposition \ref{pi_all}, if the allocation policy is PI, then the equilibrium price is cost-independent. Hence, Lemma \ref{expected_p} and Theorem \ref{seller_rev} can extend to the PI allocation policies. Since the allocation policy of the VWM-PIA is PI and the cost distributions are regular, one can check that the payment policy of the VWM-PIA is consistent with (\ref{payment}), i.e., the VWM-PIA is IC and IR. In addition, as the allocation is PI, then each advertiser will submits $\overline{p}_i$ as her display price in equilibrium. Recall that the VWM-PIA maximizes the virtual welfare pointwisely for each report, then according to Theorem \ref{seller_rev} the VWM-PIA maximizes the revenue over all PI allocation policies.
	\end{proof}
	
	\subsection{Affine Maximizer Allocation Policy}
	This section investigates another family of allocation policy, called affine maximizer. In this kind of allocation policy, the slot will be allocated to maximize the weighed and boosted social welfare. The platform can utilize such allocation policies to artificially increase the winning chance of advertisers with low values or outright ban certain outcomes in order to boost the revenue \cite{guo2017}. The formal definition of an affine maximizer allocation policy is given below.
	
	\begin{defn}
		Given some advertiser weights ${\bf w}=(w_1, \cdots, w_n) \in \mathcal{R}_{+}^{N}$ and boosts ${\bf b}=(b_1, \cdots, b_n) \in \mathcal{R}^{N}$, an allocation policy $\pi$ is called an affine maximizer if for all reports $({\bf c}', {\bf p})$, the allocations are $$\pi({\bf c}', {\bf p})\in \arg\max_{\pi'({\bf c'},{\bf p})}\{b_{i}+\sum_{i\in N}w_iv_i(c_i', p_i)
		\pi_i'({\bf c'}, {\bf p})\}.$$
	\end{defn}
	Let $\psi_i(c_i', p_i)=b_i+w_iv_i(c_i', p_i)$ be the weighed and booted value of advertiser $i$, then an affine maximizer allocation policy will allocate the slot to advertiser $i$ with the maximum $\psi_i(c_i', p_i)$. Accordingly, ad auctions with an affine maximizer, called affine maximizer auction, is given below.
	\begin{framed}
		\noindent\textbf{Affine Maximizer Auction (AMA)}\\
		\rule{\textwidth}{0.5pt}
		Predefine a set of weights $\{w_i\}_{i\in N}$ and boosts $\{b_i\}_{i\in N}$.
		\begin{itemize}
			\item {\bf Allocation Policy:} Given reports $({\bf c}', {\bf p})$, allocate the ad slot to maximize $\sum_{i=1}^{N}\psi_i(c_i',p_i)\pi_i({\bf c}', {\bf p})$, break tie arbitrarily.
			\item \textbf{Payment policy:} For all advertiser $i\in N$, her payment $x_i({\bf c}', {{\bf p}})$ is defined below:
			\begin{itemize}
				\item  if $\pi_i({\bf c}', {\bf p})=0$, then $x_i({\bf c}', {\bf p})=0$;
				\item if $\pi_i({\bf c}', {\bf p})=1$, then $x_i({\bf c}', {\bf p})$ is defined as $$v_i(\psi_i^{-1}(\psi^{(2)}({\bf c}', {{\bf p}}),{p}_i), {p}_i),$$
			\end{itemize}
			where $\psi_i^{-1}$ is the inverse function of $\psi_i$ w.r.t. $c_i$ and $\psi^{(2)}({\bf c}', {{\bf p}})$ denotes the second highest weighted and boosted value.
		\end{itemize}
	\end{framed}
	\begin{prop}\label{aff_all}
		The AMA is IC and IR, and the price $$\tilde{p}_i^{\mathcal{M}}(c_i)=\arg\max_{p_i'}\{v_i(c_i, p_i')\}$$ forms the (dominant-strategy) equilibrium price for all $i$.
	\end{prop}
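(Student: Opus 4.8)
The plan is to split the claim into two parts: first establishing that the AMA satisfies IC and IR, and then showing that $\tilde{p}_i^{\mathcal{M}}(c_i)$ is a dominant-strategy optimal price report for every advertiser. For the IC and IR part, I would invoke the characterization in Theorem \ref{ic_cha}, mirroring the argument used for Proposition \ref{vm_result}. The first step is monotonicity: since $v_i(c_i, p_i)$ is non-increasing in $c_i$ and the weight $w_i \geq 0$, the boosted value $\psi_i(c_i', p_i) = b_i + w_i v_i(c_i', p_i)$ is non-increasing in $c_i'$; hence raising her reported cost can only make advertiser $i$ lose the slot, so $\pi_i$ is non-increasing in $c_i'$, giving Condition 1. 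The second step is to check that the stated payment coincides with formula (\ref{payment}) at $U_i(\mathbf{c}'_{-i}, \mathbf{p}) = 0$: the winner $i$ keeps the slot exactly while $\psi_i(c_i', p_i) \geq \psi^{(2)}(\mathbf{c}', \mathbf{p})$, equivalently $c_i' \leq \psi_i^{-1}(\psi^{(2)}(\mathbf{c}', \mathbf{p}), p_i)$, and integrating this threshold allocation as in Proposition \ref{vm_result} recovers the charge $v_i(\psi_i^{-1}(\psi^{(2)}(\mathbf{c}', \mathbf{p}), p_i), p_i)$. With $U_i = 0 \geq 0$, IR follows immediately from Theorem \ref{ir_cha}.

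The decisive observation for the equilibrium part is that the winner's payment does not depend on her own price report. Writing $M_{-i} = \max_{j \neq i}\psi_j(c_j, p_j)$ for the largest boosted value among the others, which equals $\psi^{(2)}(\mathbf{c}', \mathbf{p})$ when $i$ wins and is independent of $p_i$, the threshold cost $\psi_i^{-1}(M_{-i}, p_i)$ satisfies $b_i + w_i v_i(\psi_i^{-1}(M_{-i}, p_i), p_i) = M_{-i}$, so the winner's charge collapses to $v_i(\psi_i^{-1}(M_{-i}, p_i), p_i) = (M_{-i} - b_i)/w_i$, a quantity fixed by the others' reports alone. Consequently, fixing $\mathbf{c}_{-i}$ and $\mathbf{p}_{-i}$, advertiser $i$ (who reports her cost truthfully by IC) earns utility $v_i(c_i, p_i) - (M_{-i} - b_i)/w_i$ when she wins and $0$ when she loses, and she wins precisely when $v_i(c_i, p_i) \geq (M_{-i} - b_i)/w_i$.

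Given this structure, the last step is a short case analysis showing $\tilde{p}_i^{\mathcal{M}}(c_i) = \arg\max_{p_i'} v_i(c_i, p_i')$ is optimal regardless of the others' behavior. Let $v_i^\star = v_i(c_i, \tilde{p}_i^{\mathcal{M}}(c_i))$ be the maximal attainable value. If $v_i^\star \geq (M_{-i} - b_i)/w_i$, reporting $\tilde{p}_i^{\mathcal{M}}(c_i)$ secures the slot and yields utility $v_i^\star - (M_{-i} - b_i)/w_i \geq 0$; any alternative price either loses (utility $0$) or wins at the same fixed charge but with weakly smaller value, so it is weakly dominated. If instead $v_i^\star < (M_{-i} - b_i)/w_i$, then no price report lets $i$ win, her utility is identically $0$, and $\tilde{p}_i^{\mathcal{M}}(c_i)$ is again weakly optimal. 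Since this holds for every realization of $\mathbf{c}_{-i}$ and $\mathbf{p}_{-i}$, $\tilde{p}_i^{\mathcal{M}}(c_i)$ is a dominant-strategy equilibrium price.

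The only genuine subtlety, and the step I would be most careful to justify, is the price-independence of the winning payment derived in the second paragraph; once that reduction is secured, the optimization over $p_i$ degenerates into maximizing $v_i(c_i, \cdot)$, and the conclusion is immediate.
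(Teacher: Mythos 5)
Your proposal is correct and follows essentially the same route as the paper's proof: verify IC and IR through the characterization in Theorem \ref{ic_cha} (monotone allocation plus the threshold payment formula with $U_i = 0$), and then observe that the winner's charge reduces to $(\psi^{(2)}({\bf c},{\bf p})-b_i)/w_i$, which is independent of $p_i$, so that maximizing utility over the price report degenerates into maximizing $v_i(c_i,\cdot)$, followed by the same win/lose case analysis. The only cosmetic difference is that you derive the price-independence directly from the identity $\psi_i(\psi_i^{-1}(y,p_i),p_i)=y$, whereas the paper substitutes the explicit formula $\psi_i^{-1}(y,p_i)=p_i-\frac{y-b_i}{w_i\lambda_i(p_i)}$ into the utility; the two computations are identical in substance.
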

	\begin{proof}
		According to the definition of affine maximizer, we know that the allocation policy is non-increasing in $c_i$. In addition, we can further verify that the payment policy is consistent with (\ref{payment}), and thereby the AMA is IC and IR. Next, we show that $\tilde{p}_i^{\mathcal{M}}(c_i)=\arg\max_{p_i'}\{v_i(c_i, p_i')\}$ constitutes the equilibrium price for all $i$. Given any report $({\bf c}, {\bf p})$, by the payment policy the winner's payment in the AMA equals is  $$\lambda_i(p_i)[\psi_i^{-1}(\psi^{(2)}({\bf c}, {\bf p}), p_i)-c_i].$$ As $$\psi_i^{-1}(y, p_i)=p_i-\frac{y-b_i}{w_i\lambda(p_i)},$$ then winner $i$'s utility can be reformulated as $$u_i(c_i, (c_i, {\bf c}_{-i}), {\bf p})=v_i(c_i, p_i)-\frac{\psi^{(2)}({\bf c}, {\bf p})-b_i}{w_i}.$$
		%
		%
		%
		Notice that $\psi^{(2)}({\bf c}, {\bf p})$ is independent of $p_i$, and thereby choosing $p_i$ that maximizes $v_i(c_i, p_i)$ can maximize the winner's utility. Assume advertiser $i$ submits a display price of $\tilde{p}_i^{\mathcal{M}}(c_i)$, we next show that advertiser $i$ cannot increase her utility by choosing other display prices. If $i$ wins the item with the report $(c_i, \tilde{p}_i^{\mathcal{M}}(c_i))$, then according to our previous analysis, the best submitted price is exactly $\tilde{p}_i^{\mathcal{M}}(c_i)$. Otherwise, if $i$ loses the item with the report $(c_i, \tilde{p}_i^{\mathcal{M}}(c_i))$, then she will still lose by arbitrary $p_i$ according to the definitions of $\psi_i$ and $\tilde{p}_i^{\mathcal{M}}(c_i)$. Therefore, her payment and utility are still zero.
		Combined with above analysis, we conclude that for all $i$, all others' reports $({\bf c}_{-i}, {\bf p}_{-i})$, submitting $(c_i, \tilde{p}_i^{\mathcal{M}}(c_i))$ can maximize the utility.
	\end{proof}
	Recall that the allocation policy of the WM-RP is an instance of  the affine maximizer allocation policy, therefore the following result is straightforward.
	\begin{corollary}\label{vw_ef}
		The WM-RP is EF.
	\end{corollary}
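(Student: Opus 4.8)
The plan is to exploit the observation flagged just before the corollary: the WM-RP allocation rule is a special case of the affine maximizer allocation policy, obtained by setting $w_i=1$ and $b_i=0$ for every $i$, so that the weighted-and-boosted value $\psi_i(c_i',p_i)$ collapses to the raw value $v_i(c_i',p_i)$. Once this identification is made, Proposition \ref{aff_all} applies verbatim to the WM-RP, and the two requirements in the definition of EF (Def.\ \ref{efficient_allocation}) can be assembled from results already established.

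First I would recall that the WM-RP is EF-RP by Proposition \ref{vm_result}, which means it maximizes the social welfare $W({\bf c}',{\bf p},\mathcal{M})$ for every report profile $({\bf c}',{\bf p})$. This immediately discharges the first condition of the EF definition, namely that $\mathcal{M}\in\arg\max_{\mathcal{M}'}W({\bf c}',{\bf p},\mathcal{M}')$. What remains is to show that the prices reported in equilibrium lie in $\Pi_i({\bf c}')$.

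Next I would invoke Proposition \ref{aff_all}. Because the WM-RP is an AMA with $\psi_i=v_i$, that proposition tells us the dominant-strategy equilibrium price of each advertiser $i$ is $\tilde{p}_i^{\mathcal{M}}(c_i)=\arg\max_{p_i'}\{v_i(c_i,p_i')\}$. The key step is then to argue that this profile of individually value-maximizing prices is in fact jointly welfare-maximizing, i.e.\ ${\bf p}=\{\tilde{p}_i^{\mathcal{M}}(c_i)\}_{i\in N}\in\Pi_i({\bf c}')$. This is where the single-slot structure does the work: since at most one advertiser is allocated the slot, the objective $\sum_{i\in N}v_i(c_i,p_i')\pi_i'$ defining $\Pi_i({\bf c}')$ equals $\max_i\max_{p_i'}v_i(c_i,p_i')$, so the joint maximum is attained precisely when the winner's price maximizes her own value and the slot is awarded to the advertiser with the largest such maximal value. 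Under the WM-RP every advertiser reports $\tilde{p}_i^{\mathcal{M}}(c_i)$ and the allocation maximizes $\sum_i v_i(c_i,\tilde{p}_i^{\mathcal{M}}(c_i))\pi_i$, so the realized price--allocation pair attains exactly this joint maximum; hence ${\bf p}\in\Pi_i({\bf c}')$.

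Combining the two facts---welfare maximization for all reports from Proposition \ref{vm_result}, and equilibrium prices in $\Pi_i({\bf c}')$ from Proposition \ref{aff_all} together with the separability argument---yields that the WM-RP satisfies both conditions of Def.\ \ref{efficient_allocation}, so it is EF. I expect the only genuinely substantive step to be the separability argument establishing that the individually optimal prices coincide with the jointly optimal ones; every other step is an immediate application of the cited results.
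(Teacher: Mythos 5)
Your proposal is correct and follows essentially the same route as the paper's own proof: identifying the WM-RP allocation rule as an affine maximizer with ${\bf w}=(1,\dots,1)$ and ${\bf b}=(0,\dots,0)$, invoking Proposition \ref{aff_all} to get the equilibrium prices $\tilde{p}_i^{\mathcal{M}}(c_i)$, and then observing that with a single slot the resulting price--allocation pair attains the joint welfare maximum required by Def.\ \ref{efficient_allocation}. The only difference is presentational: you spell out the separability argument and the two EF conditions explicitly, where the paper compresses them into one sentence.
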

	\begin{proof}
		According to the definition of the WM-RP, its allocation policy is an affine maximizer with ${\bf b}=\{0, \cdots, 0\}$ and ${\bf w}=\{1, \cdots, 1\}$. Since the WM-RP is IC and IR for all reported prices, we know that each advertiser will submit a report $(c_i, \tilde{p}_i^{\mathcal{M}}(c_i))$ to maximize her own utility. In other words, in the WM-RP the item will be allocated to the advertiser with the maximum gain $\max_{i\in N}\{v_i(c_i, \tilde{p}_i^{\mathcal{M}}(c_i))\}$, implementing the efficient allocation based on Def. 3 and Proposition \ref{aff_all}.
	\end{proof}
	Corollary \ref{vw_ef} shows that the platform can achieve the maximum social welfare by designing proper auction mechanism, even when the advertisers' strategize on the display prices.
	To optimize the platform's revenue over all affine maximizer allocation policies, we only need to adjust the parameters $({\bf w}, {\bf b})$, which is a typical optimization problem. Since there is no known
	short-cut for calculating the expected revenue \cite{guo2017}, previous studies have developed many techniques to search for the (approximate) optimal parameters, e.g., grid-based gradient descent approach \cite{lik2004,lik2005}, linear programming based heuristic \cite{guo2017}, neutral networks \cite{mic2022}.
	\section{Conclusion}
	This paper investigates the problem of ad auction design in the presence of display prices. We provide a characterization for all IC,  IR auctions with display prices, and analyze the welfare-maximizing and revenue-maximizing auctions under different scenarios.  Our results show that the engagement of the price information does affect the advertisers' bidding behaviors and the platform can leverage the price information to optimize the performance of ad delivery. There are many other works worthy of further investigation. For example, we only analyze two special classes of allocation policies for the strategic display price setting, a more broad class of mechanisms need be further studied. In addition, though the revenue-maximizing auction in non-strategic display price setting is identified, characterizing the revenue-maximizing auction in the general setting is still an open problem.
	
	\bibliographystyle{named}
	\bibliography{ijcai22}

\begin{thebibliography}{}

\bibitem[\protect\citeauthoryear{Aggarwal \bgroup \em et al.\egroup
  }{2019}]{agg2019}
Gagan Aggarwal, Ashwinkumar Badanidiyuru, and Aranyak Mehta.
\newblock Autobidding with constraints.
\newblock In {\em Web and Internet Economics}, pages 17--30, 2019.

\bibitem[\protect\citeauthoryear{Castiglioni \bgroup \em et al.\egroup
  }{2022}]{Castiglioni2022EfficiencyOA}
Matteo Castiglioni, Diodato Ferraioli, Nicola Gatti, Alberto Marchesi, and
  Giulia Romano.
\newblock Efficiency of ad auctions with price displaying.
\newblock In {\em Proc. of the AAAI Conf. on Artificial Intelligence}, pages
  4933--4940, 2022.

\bibitem[\protect\citeauthoryear{Curry \bgroup \em et al.\egroup
  }{2022}]{mic2022}
Michael~J. Curry, Tuomas Sandholm, and John~P. Dickerson.
\newblock Differentiable economics for randomized affine maximizer auctions.
\newblock {\em CoRR}, abs/2202.02872, 2022.

\bibitem[\protect\citeauthoryear{Edelman and Ostrovsky}{2007}]{EDELMAN2007192}
Benjamin Edelman and Michael Ostrovsky.
\newblock Strategic bidder behavior in sponsored search auctions.
\newblock {\em Decision Support Systems}, 43(1):192--198, 2007.

\bibitem[\protect\citeauthoryear{Edelman \bgroup \em et al.\egroup
  }{2007}]{edelman2007}
Benjamin Edelman, Michael Ostrovsky, and Michael Schwarz.
\newblock Internet advertising and the generalized second-price auction:
  Selling billions of dollars worth of keywords.
\newblock {\em American Economic Review}, 97(1):242--259, 2007.

\bibitem[\protect\citeauthoryear{{Facebook Business Help
  Center}}{2022}]{facebook:auto}
{Facebook Business Help Center}.
\newblock About bid strategies.
\newblock
  \url{https://www.facebook.com/business/help/223852498347426?id=2393014447396453},
  2022.
\newblock [Online; accessed 5-July-2022].

\bibitem[\protect\citeauthoryear{Golrezaei \bgroup \em et al.\egroup
  }{2021}]{glo2021}
Negin Golrezaei, Ilan Lobel, and Renato Paes~Leme.
\newblock Auction design for roi-constrained buyers.
\newblock In {\em Proc. of the Web Conference 2021}, page 3941–3952, 2021.

\bibitem[\protect\citeauthoryear{{Google Ads Help}}{2022}]{google:auto}
{Google Ads Help}.
\newblock About automated bidding.
\newblock \url{https://support.google.com/google-ads/answer/2979071?hl=en},
  2022.
\newblock [Online; accessed 5-July-2022].

\bibitem[\protect\citeauthoryear{Guo \bgroup \em et al.\egroup
  }{2017}]{guo2017}
Mingyu Guo, Hideaki Hata, and Ali Babar.
\newblock Optimizing affine maximizer auctions via linear programming: An
  application to revenue maximizing mechanism design for zero-day exploits
  markets.
\newblock In {\em PRIMA 2017: Principles and Practice of Multi-Agent Systems},
  pages 280--292, 2017.

\bibitem[\protect\citeauthoryear{{IAB}}{2022}]{iab:2021}
{IAB}.
\newblock Internet advertising revenue report.
\newblock
  \url{https://www.iab.com/wp-content/uploads/2022/04/IAB_Internet_Advertising_Revenue_Report_Full_Year_2021.pdf},
  2022.
\newblock [Online; accessed 5-July-2022].

\bibitem[\protect\citeauthoryear{Jansen and Mullen}{2008}]{jansen2008}
Bernard~J. Jansen and Tracy Mullen.
\newblock Sponsored search: an overview of the concept, history, and
  technology.
\newblock {\em International Journal of Electronic Business}, 6(2):114--131,
  2008.

\bibitem[\protect\citeauthoryear{Krishna}{2009}]{krishna2009auction}
Vijay Krishna.
\newblock {\em Auction theory}.
\newblock Academic Press, 2009.

\bibitem[\protect\citeauthoryear{Laffont and Robert}{1996}]{LAFFONT1996181}
Jean-Jacques Laffont and Jacques Robert.
\newblock Optimal auction with financially constrained buyers.
\newblock {\em Economics Letters}, 52(2):181--186, 1996.

\bibitem[\protect\citeauthoryear{Li \bgroup \em et al.\egroup
  }{2019}]{chenchen2019}
Chenchen Li, Xiang Yan, Xiaotie Deng, Yuan Qi, Wei Chu, Le~Song, Junlong Qiao,
  Jianshan He, and Junwu Xiong.
\newblock Latent dirichlet allocation for internet price war.
\newblock In {\em Proc. of the Thirty-Third AAAI Conf. on Artificial
  Intelligence}, pages 639--646, 2019.

\bibitem[\protect\citeauthoryear{Likhodedov and Sandholm}{2004}]{lik2004}
Anton Likhodedov and Tuomas Sandholm.
\newblock Methods for boosting revenue in combinatorial auctions.
\newblock In {\em Proc. of the 19th National Conf. on Artifical Intelligence},
  page 232–237, 2004.

\bibitem[\protect\citeauthoryear{Likhodedov and Sandholm}{2005}]{lik2005}
Anton Likhodedov and Tuomas Sandholm.
\newblock Approximating revenue-maximizing combinatorial auctions.
\newblock In {\em Proc. of the 20th National Conf. on Artificial Intelligence},
  page 267–273, 2005.

\bibitem[\protect\citeauthoryear{{Microsoft Advertising
  Editor}}{2022}]{microsoft:auto}
{Microsoft Advertising Editor}.
\newblock Automatically optimize your campaign with bid strategies.
\newblock \url{https://help.ads.microsoft.com/}, 2022.
\newblock [Online; accessed 5-July-2022].

\bibitem[\protect\citeauthoryear{{Moran} and {Hunt}}{2005}]{moran2005search}
Mike {Moran} and Bill {Hunt}.
\newblock {\em Search Engine Marketing, Inc.: Driving Search Traffic to Your
  Company's Web Site}.
\newblock IBM Press, 2005.

\bibitem[\protect\citeauthoryear{Myerson}{1981}]{myerson1981optimal}
Roger~B Myerson.
\newblock Optimal auction design.
\newblock {\em Mathematics of Operations Research}, 6(1):58--73, 1981.

\bibitem[\protect\citeauthoryear{Naroditskiy and Greenwald}{2007}]{Victor2007}
Victor Naroditskiy and Amy Greenwald.
\newblock Using iterated best-response to find bayes-nash equilibria in
  auctions.
\newblock In {\em Proc. of the 22nd National Conf. on Artificial Intelligence},
  page 1894–1895, 2007.

\bibitem[\protect\citeauthoryear{Shen \bgroup \em et al.\egroup
  }{2017}]{shen2017}
Weiran Shen, Peng Binghui, Hanpeng Liu, Michael Zhang, Ruohan Qian, Yan Hong,
  Zhi Guo, Zongyao Ding, Pengjun Lu, and Pingzhong Tang.
\newblock Reinforcement mechanism design: With applications to dynamic pricing
  in sponsored search auctions.
\newblock In {\em Proc. of the AAAI Conf. on Artificial Intelligence}, pages
  2236--2243, 2017.

\bibitem[\protect\citeauthoryear{Varian}{2007}]{VARIAN20071163}
Hal~R. Varian.
\newblock Position auctions.
\newblock {\em International Journal of Industrial Organization},
  25(6):1163--1178, 2007.

\bibitem[\protect\citeauthoryear{Wilkens \bgroup \em et al.\egroup
  }{2017}]{wilkens2017}
Christopher~A. Wilkens, Ruggiero Cavallo, and Rad Niazadeh.
\newblock Gsp: The cinderella of mechanism design.
\newblock In {\em Proc. of the 26th Int. Conf. on World Wide Web}, page
  25–32, 2017.

\bibitem[\protect\citeauthoryear{{Yang} \bgroup \em et al.\egroup
  }{2019}]{yang2019aiads}
Xiao {Yang}, Daren {Sun}, Ruiwei {Zhu}, Tao {Deng}, Zhi {Guo}, Zongyao {Ding},
  Shouke {Qin}, and Yanfeng {Zhu}.
\newblock Aiads: automated and intelligent advertising system for sponsored
  search.
\newblock In {\em Proc. of the 25th Int. Conf. on Knowledge Discovery and Data
  Mining}, pages 1881--1890, 2019.

\end{thebibliography}
	\appendix

\end{document}